\newenvironment{proof} {\textsc{Proof}\quad} {\hfill $\Box$\\}
\definecolor{citecolor}{rgb}{0.5,0.5,0.5}
\newcommand{\BP}{\ensuremath{\mathbf{P}}}
\newcommand{\BA}{\ensuremath{\mathbf{A}}}
\newcommand{\Act}{\ensuremath{\mathbf{A}}}
\newcommand{\Ag}{\ensuremath{\mathbf{I}}}
\newcommand{\M}{\mathcal{M}}
\newcommand{\rel}[1]{\xrightarrow{#1}}
\newcommand{\lr}[1]{\langle #1 \rangle}
\newcommand{\AxTr}{\ensuremath{\mathtt{T}}}
\newcommand{\EAL}{\textsf{EAL}}
\newcommand{\ELKh}{\textsf{ELKh}}
\newcommand{\ptime}{P{\small TIME}}
\newcommand{\DISTK}{\ensuremath{\mathtt{DISTK}}}
\newcommand{\TAUT}{\ensuremath{\mathtt{TAUT}}}
\newcommand{\MP}{\ensuremath{\mathtt{MP}}}
\newcommand{\ZIG}[1]{\ensuremath{\mathtt{PR}(#1)}}
\newcommand{\SUB}{\ensuremath{\mathtt{SUB}}}
\newcommand{\bigA}{\Act}
\newtheorem{theorem}{Theorem}
\newtheorem{definition}[theorem]{Definition}
\newtheorem{lemma}[theorem]{Lemma}
\newtheorem{proposition}[theorem]{Proposition}
\newtheorem{example}[theorem]{Example}
\newcommand{\bis}{\mathrel{\mathchoice%
		{\raisebox{.3ex}{$\,
				\underline{\makebox[.7em]{$\leftrightarrow$}}\,$}}%
		{\raisebox{.3ex}{$\,
				\underline{\makebox[.7em]{$\leftrightarrow$}}\,$}}%
		{\raisebox{.2ex}{$\,
				\underline{\makebox[.5em]{\scriptsize$\leftrightarrow$}}\,$}}%
		{\raisebox{.2ex}{$\,
				\underline{\makebox[.5em]{\scriptsize$\leftrightarrow$}}\,$}}}}
\newcommand{\Kh}{\mathsf{Kh}}
\newcommand{\K}{\mathsf{K}}
\newcommand{\V}{\mathcal{V}}
\newcommand{\lra}{\leftrightarrow}
\renewcommand{\phi}{\varphi}
\newcommand{\LRA}{\Leftrightarrow}
\DeclareSymbolFont{symbolsC}{U}{txsyc}{m}{n}
\DeclareMathSymbol{\strictif}{\mathrel}{symbolsC}{74}
\newcommand{\AxTrK}{\ensuremath{\mathtt{T}}}
\newcommand{\AxTransK}{\ensuremath{\mathtt{4}}}
\newcommand{\AxEucK}{\ensuremath{\mathtt{5}}}
\newcommand{\EQREPKh}{\ensuremath{\mathtt{MONOKh}}}
\newcommand{\AxKtoKh}{\ensuremath{\mathtt{AxKtoKh}}}
\newcommand{\AxKhtoKhK}{\ensuremath{\mathtt{AxKhtoKhK}}}
\newcommand{\AxKhtoKKh}{\ensuremath{\mathtt{AxKhtoKKh}}}
\newcommand{\AxKhKh}{\ensuremath{\mathtt{AxKhKh}}}
\newcommand{\AxKhbot} {\ensuremath{\mathtt{AxKhbot}}}
\newcommand{\NECK}{\ensuremath{\mathtt{NECK}}}
\newcommand{\cl}[1]{cl(#1)}
\newcommand{\SFC}{\Phi}
\newcommand{\ACT}{\Act}
\newcommand{\SLKhc}{\mathbb{SLKHC}}
\newcommand{\Prg}{\ensuremath{\mathtt{KBP}}}
\newcommand{\ifthel}[3]{\ensuremath{\mathtt{if}{~#1~}\mathtt{then}{~#2~}\mathtt{else}{~#3} }}
\newcommand{\Ptime}{PTIME}
\newcommand{\shortIf}[3]{#1 ? #2\!:#3}
\newcommand{\T}{\mathcal{T}}
\title{Knowing How to Plan}
\author{Yanjun Li
	% \institute{NICTA\\ Sydney, Australia}
	\institute{College of Philosophy,\\ Nankai University, 
		Tianjin, China}
	% \email{rvg@cs.stanford.edu}
	\and
	Yanjing Wang\thanks{Corresponding author. Yanjing Wang thanks the support of NSSF grant 19BZX135.}
	\institute{
		% Center for Philosophy and the Future of Humanity, Peking University\\
		% Beijing, China\\
		Department of Philosophy,\\ Peking University, 
		Beijing, China
		}
	%\email{\quad is@gmail.com \quad\qquad somebody@else.org}
}
\begin{document}
\maketitle

\begin{abstract}
Various \textit{planning-based know-how logics} have been studied in the recent literature. In this paper, we use such a logic to do \textit{know-how-based planning} via model checking. In particular, we can  handle the \textit{higher-order epistemic planning} involving know-how formulas as the goal, e.g., find a plan to make sure $p$ such that the adversary does not know how to make $p$ false in the future. We give a PTIME algorithm for the model checking problem over finite epistemic transition systems and axiomatize the logic under the assumption of perfect recall. 
\end{abstract}
% \keywords{epistemic planning, epistemic logic, logic of know-how, model checking }

\section{Introduction}

Standard Epistemic Logic (EL) mainly studies reasoning patterns of \textit{knowing that $\phi$}, despite early contributions  by Hintikka on formulating other \textit{know-wh} expressions such as \textit{knowing who} and \textit{why} using first-order and higher-order modal logic. In recent years, there is a resurgence of interest on epistemic logics of \textit{know-wh} powered by the new techniques for fragments of first-order modal logic based on the so-called \textit{bundle modalities} packing a quantifier and a normal epistemic modality together \cite{Wang2018,Wang2017,Padmanabha2018}. Within the varieties of logics of \textit{know-wh}, the logics of know-how received the most attention in AI (cf. e.g., \cite{Wang2016,Fervari2017,Naumov2018a,LiWangAIJ2021}). 

%\subsection{From planning-based know-how to know-how based planning}

Besides the inspirations from philosophy and linguistics (e.g., \cite{stanley2001knowing}), the idea of \textit{automated planning} in AI also plays an important role in the developments of various logics of know-how. The core idea is (roughly) to interpret \textit{knowing how to achieve $\phi$} as a \textit{de re} statement of knowledge: ``\textit{there is} a plan such that you \textit{know that} this plan will \textit{ensure} $\phi$''. Here, depending on the exact notion of \textit{plans} and how much we want to ``ensure'' that the plan works, there can be different semantics based on ideas from conformant planning and contingent planning in AI. However, as shown by Li and Wang \cite{LiWangAIJ2021}, there is a logic core which is independent from the exact notions of plans underlying all these notions of know-how. We can actually unify different planning-based approaches in a powerful framework.

In this paper, we show that the connection between planning and know-how is not merely one way in terms of \textbf{\textit{planning-based know-how}}, it also makes perfect sense to do \textbf{\textit{know-how-based planning}} which generalizes the notion of planning to incorporate know-how based goals to be explained in the next subsection. As  observed in \cite{LiWangAIJ2021}, the typical epistemic planning problem given explicit models can be viewed as model checking problems in our framework with the know-how modality $\Kh$ in the language. For example, $\Kh_1 (\K_2\phi \land \neg \K_3\K_2\phi)$ captures the epistemic planning problem for agent 1 to ensure that agent 2 knows that $\phi$ and keep it a secret from agent 3. Such epistemic planning can also be done in other epistemic approaches, for example, by using dynamic epistemic logic (cf. e.g., \cite{BA2011,Bolander2015,BOLANDER202010,EngesserMNT21}).  However, we can do much more with the $\Kh$ modality in hand, as explained below.

\subsection{Higher-order epistemic planning}
A distinct feature of modal logic is that we can bring some notions of the  meta-language to the object-language. This is not merely formalizing the existing meta-language concepts more precisely since the object language can open new possibilities. Consider the following multi-agent epistemic language of know-that and know-how: 
	\[\phi::=\top\mid p\mid\neg\phi\mid (\phi\land\phi)\mid\K_i\phi\mid \Kh_i\phi \]
Since we have the know-how operators $\Kh_i$ in the fully compositional logical language, the goal of planning can involve the (Boolean combinations and nestings of) know-how formulas as well, and we call such planning problems  \textbf{ \textit{higher-order epistemic planning}}, i.e., \textit{planning about planning}. By \textit{higher-order} we do not mean higher-order logic or higher-order epistemic goals in terms of nested know-that formulas.

Although in the single-agent scenario, $\Kh_1\Kh_1\phi$ is equivalent to $\Kh_1\phi$ under reasonable conditions as shown in \cite{Fervari2017,LiWangAIJ2021}, it still makes sense to do higher-order planning. For example, the planning problem to achieve $\phi$ with the future control to turn it off is expressed by $\Kh_1(\phi \land \Kh_1\neg\phi)$, which is not reducible to a formula without the nesting of $\Kh_1$.\footnote{In contrast to normal modal logic, $(\Kh_i\phi\land\Kh_i\psi)\lra \Kh_i(\phi\land \psi)$ is \textit{invalid} thus $\Kh_i\Kh_i\phi\lra \Kh_i\phi$ cannot be applied, cf. e.g., \cite{LiWangAIJ2021}.}

In a multi-agent setting, given that different agent may have different abilities, the importance of higher-order planning is self-evident. Actually, it is a characteristic human instinct to plan in such a higher-order way. A one-year-old baby girl may not know how to open a bottle but she knows how to use her parents' knowledge-how to achieve her goal.\footnote{As new fathers, both authors know this well and sometimes it can be tiring to be a fully cooperative parent. At the same time, the parents are eager to know how to let their children acquire relevant knowledge-how as early as possible.} 
%On the other hand, the parents are eager to know how to let small kids acquire relevant knowledge-how as early as possible. 
As a more concrete example about academic collaborations, it often happens that both researchers do not know how to prove a theorem independently, but one knows how to show a critical lemma which can simplify the original problem and thus enable the other to prove the final theorem using her expertise about the simplified statement. In our language, the situation can be expressed as $\neg \Kh_1 \phi \land \neg \Kh_2\phi \land \Kh_1\Kh_2\phi.$ Of course, it depends on the cooperation of agent 2 to finally ensure the goal $\phi$ of agent 1. The  nesting of know-how can go arbitrarily deep and also be interactive such as  $\Kh_1\Kh_2\Kh_1\Kh_2\phi$. Such planning based on others' knowledge-how is at the core of the arts of leadership and management in general. 

Higher-order planning also makes perfect sense in non-cooperative settings. For example, it is sometimes important to ensure not only the goal $\phi$ but also that the adversary cannot spoil it in the future. This is expressed by $\Kh_1(\phi\land \neg \Kh_2 \neg\phi)$ in our language of knowing how. It is also interesting to have mixed goals with both $\K$ and $\Kh$,  such as showing a commitment by achieving $\phi$ while letting the other know you will not be able to change it afterwards: $\Kh_1\K_2(\phi\land \neg \Kh_1\neg\phi)$. In the more interactive game-like scenarios, $\Kh_1\neg \Kh_2\neg \Kh_1 \phi$ describe a winning strategy-like plan. We will come back to the related work using \textit{Alternating-time Temporal Logic} (ATL)-like logics at the end of this introduction. Let us first see a concrete example.   
\begin{example}
Suppose a patient is experiencing some rare symptom $p$. To know the cause ($q$ or $\neg q$), Doctor 1 in the local hospital suggests the patient first take a special fMRI scan ($a$) which is only available in the local hospital at the moment, and bring the result to Doctor 2 in a hospital in another city, who is more experienced in examining the result from the scanner. Then Doctor 2 will then know the cause, depending on which different treatments ($b$ or $c$) should be performed by Doctor 2 to cure the patient.  Intuitively, Doctor 1 knows how to let Doctor 2 know how to cure the patient although neither Doctor 1 nor Doctor 2 knows how to cure it without the help of the other. The situation is depicted below as a model with both epistemic relations (dotted lines labelled by $1,2$) and action relations (solid lines labelled by $a,b,c$). Note that only $1$ can execute action $a$ and only 2 can perform $b$ or $c$. Reflexive epistemic arrows are omitted. 
$$
\xymatrix{
s_1:p\ar@{.}[d]|{1,2}\ar[r]|a& s_3:p\ar@{.}[d]^1 \ar[r]|b\ar[rd]|c& s_5:\\
s_2:p,q\ar[r]|a& s_4:p,q \ar[ru]|c\ar[r]|b& s_6: p, q \\
% \save "1,1"."2,1"!C="g1"*+[F--:<+20pt>]\frm{}
% \restore
% \save "1,1"."2,1"!C="g1"*+[F.:<+30pt>]\frm{}
% \restore
% \save "1,2"."1,2"!C="g1"*+[F--:<+20pt>]\frm{}
% \restore
% \save "2,2"."2,2"!C="g2"*+[F--:<+20pt>]\frm{}
% \restore
% \save "1,3"."1,3"!C="g1"*+[F--:<+20pt>]\frm{}
% \restore
% \save "2,3"."2,3"!C="g2"*+[F--:<+20pt>]\frm{}
% \restore
}
%\vspace{-15pt}
$$
We would like to have $\neg \Kh_1 \neg p\land \neg \Kh_2 \neg p\land \Kh_1((\K_2 q\lor\K_2\neg q)\land \Kh_2 \neg p)$ hold at $s_1$ according to the semantics to be introduced. 
\end{example}\label{ex.symptom}
In this paper, to facilitate such higher-order planning using model checking, we extend the single-agent framework in \cite{LiWangAIJ2021} to a genuine multi-agent setting and study the model checking complexity in details. Technical and computational complications arise due to the introduction of multiple agents with different abilities and knowledge to base their own plans. To stay as close as the intended application of automated planning, we will restrict ourselves to finite models with perfect recall.\footnote{cf. \cite{LiWangAIJ2021} for the discussions on the implicit assumption of perfect recall in epistemic planning and the technical difficulty of capturing it by axioms.}
%\noteYW{Highlight the differences and new method.}
%\noteLYJ{
%We also proposed a proof system of this logic. 
% Compared to the proof systems presented in \cite{LiWangAIJ2021}, due to the fact that we focus on models with perfect recall in this paper, the proof system here is stronger then that in \cite{LiWangAIJ2021}. 
The model checking algorithm is based on a correspondence between execution trees in the models and the syntactically specified  knowledge-based plans. It turns out that this correspondence result also enables us to significantly simplify the proofs of the soundness and completeness of the proof system compared with the method in \cite{Fervari2017,LiWangAIJ2021}.  % plays an important role in proofs of the soundness and completeness of the proof system. It enables us to significantly simplify the proofs 
%}
\medskip

We summarize our contributions below.
\begin{itemize}
    \item A multi-agent framework of logic of knowing how, which can handle the diversity of agents' abilities and formally specified plans based on each agent's own knowledge. 
%    \item A syntactic notion of agent-based epistemic plans where each agent can only use its own knowledge-that. 
    \item A PTIME model checking algorithm for the full language.  
    \item A complete axiomatization over finite models with perfect recall to ensure the semantics works as desired.\footnote{Axiomatizations can also help us to do abstract syntactic reasoning about know-how without fixing a model.}  
\end{itemize}

% Notion of consistency matters (whether they share the same set of actions?).
%     \begin{itemize}
%          \item Is it possible that $\Kh_2\phi\land \K_1\neg \Kh_2\phi$? 
%         \item Is it possible that $\Kh_2\phi\land \neg \K_1\neg \Kh_2\phi$?
%         \item Is it possible that $\Kh_2\phi\land \neg \Kh_1\Kh_2\phi$?
%         \item Is it possible that $\Kh_2\phi\land  \Kh_1\neg \Kh_2\phi$? 
%         \item Is it possible that $\neg \Kh_1\phi \land \K_1\Kh_2\phi$?
%         \item Is it possible that $\neg \Kh_1\phi \land \Kh_1\Kh_2\phi$?
%         \item Is it possible that $\neg \Kh_1\phi \land \Kh_1\Kh_1\phi$?
%         \item Is it possible that $\neg \Kh_1\phi \land \Kh_1\Kh_2\Kh_1\phi$
%     \end{itemize}

\subsection{Related work}
Doing automated planning using model checking techniques is not a new idea (cf. e.g., \cite{Giunchiglia2000}). The most distinctive feature of our framework, compared to other approaches to (epistemic) planning such as \cite{Bolander2015}, is that we can express the planning problem in the object language and incorporate higher-order epistemic planning. Following \cite{ghallab2004automated}, we base our framework on explicit-state models to lay out the theoretical foundation, instead of using more practical compact representations of models and actions which can generate the explicit-state models with certain properties (cf. \cite{LiW19,LiWangAIJ2021} for the discussions on the technical connections of the two). In contrast to the informal, model-dependent notions of plans such as policies, we use a plan specification language to formalize knowledge-based plans inspired by \cite{LangZ12,Lang2013,ZanuttiniLSS20}, but with more expressive branching conditions and more general constructors. Instead of the general structural operational semantics in \cite{LiWangAIJ2021}, we give a more intuitive semantics for our knowledge-based plans in this paper.   

Comparing to the coalition-based know-how approaches such as \cite{Naumov2018a}, we do not have any group know-how modality, but use a much more general notion of plans than the single-step joint action there. Besides the axiomatizations, which are the main focus in the previous work on the logic of know-how, we study the model checking complexity that is crucial for the application of our work to planning. 
A \textit{second-order know-how modality} $H_C^D$ was proposed in \cite{Naumov2018}, where $H_C^D\phi$ says the coalition $C$ knows how coalition $D$ can ensure $\phi$ by a single-step joint action, though $D$ may not know how themselves. 
%Note that it may happen that the coalition $D$ does not know how to ensure $\phi$ themselves due to their limited (distributed) knowledge. 
Note that $H_{\{i\}}^{\{j\}}\phi$ is neither $\Kh_i\Kh_j\phi$ nor $\K_i\Kh_j\phi$ in our setting, where $i$ may not know the plan that others may use to reach the goal. 

The $\Kh_i$ operator is clearly also related to the strategy-based \textit{de re} variants of the (single-agent) ATL operator $<\!\!< i >\!\!>$ under imperfect information (cf. e.g., \cite{Schobbens04,JamrogaA07}). In the setting of concurrent games of ATL, the strategies are functions from finite histories (or epistemic equivalence classes of them) to available actions, which induce the resulting plays of the given game that are usually infinite histories, on which temporal properties can be verified. This ``global'' notion of strategies leads to the problem of revocability of the strategies when  nesting the $<\!\!< i >\!\!>$ operators: can you change the strategy when evaluating $<\!\!< i >\!\!>\phi$ in the scope of another such operator \cite{Herzig15,AgotnesGJ07}? In contrast, it is crucial that our witness plans for know-how are not global and always terminate, and there is no controversy in the nested form of $\Kh_i$ operators in our setting. It is also crucial that the plans are executed sequentially in our setting, e.g., $\Kh_i\Kh_j\phi$ simply means agent $i$ knows how to make sure agent $j$ knows how to make sure $\phi$ \textit{after agent $i$ finishes executing the witness plan for its know-how}. Moreover,  while the \textit{de re} variants of ATL are useful in reasoning about (imperfect information) concurrent games (cf. e.g., \cite{MaubertPSS20,MaubertMPSS20}), our framework is closer to  the standard automated planning in AI over transition systems.  We leave it to the full version of the paper for a detailed technical comparison.     

\paragraph{Structure of the paper} In the rest of the paper, we first introduce briefly an epistemic language in Section \ref{sec.pre} to be used to specify the branching conditions of our plan specification language in Section \ref{sec.specl}. In Section \ref{sec.mc} we introduce the semantics of our know-how language and give a PTIME model checking algorithm. Finally, we obtain a complete axiomatization in Section \ref{sec.ax} and conclude with future directions in Section \ref{sec.conc}. 

\medskip
Due to the strict space limitation, we have to omit some proofs.  

% \begin{itemize}
% %    \item Why introduce know-how in planning besides the philosophical purpose?
%     \item Why using explicit models?
%     \item Why modal logic matters in planning?
%     \item Why syntactic langauage?
%     \item Complexity-wise for free..
% \end{itemize}

\section{Preliminaries}\label{sec.pre}
% To discuss all the later notions of know-how in a unified way, we will introduce a programming language to specify various types of plans. Some plans may involve branching structures by testing certain conditions. To specify those conditions precisely, we first introduce a simple logical language ($\Lan^{\EAL}_{\Act}$ below) and its semantics (cf. e.g., \cite{Wang2012}). Note that $\Lan^{\EAL}_{\Act}$ is only used to specify the conditions in the plans, and it  is \textit{not} the language of the logic of know-how, which will be introduced in the next section. 
%\subsection{A language for the branching conditions}
To specify the knowledge-based plans, we need the following formal language $\EAL$ to express the knowledge conditions. 
	\begin{definition}[\EAL\ Language]
	Let $\BP$ be a set of propositional letters, $\Ag$  be a set of agents, and $\BA$ be a set of atomic actions. The language %$\Lan^{\EAL}_{\Act}$ 
	of Epistemic Action Logic ($\EAL^\Act_\Ag$) is defined as follows: 
		$$\begin{array}{r@{\quad::= \quad}l}
		\varphi  &
		\top\mid p
		\mid \neg \varphi
		\mid (\varphi \land \varphi)
		\mid \K_i\phi 
		\mid [a]\phi\\
		%\mid \Halt{\pi}\\
% 		\pi &   \epsilon\mid \test{\phi}\mid a \mid (\pi\mpl\pi)\mid (\pi;\pi)\mid \pi^*\\
		%  \psi & \top\mid p\mid \neg \psi\mid (\psi\wedge \psi)\mid \K\psi
		\end{array}$$
		where $p\in\BP$, $i\in\Ag$,  %$q\in \BO$, $r\in\BP\cup\BO$, 
		and $a\in\BA$. 
	\end{definition}
The auxiliary connectives and modalities $\to, \lor, \lr{a}$ are defined as abbreviations as usual. 
%	In the later sections, various fragments of the \EPDL\ programs are used to specify linear and branching plans. 
	
	The semantics of \EAL\ formulas is defined on finite models with both epistemic relations and transitions labelled by atomic actions, which will also be used for the know-how logic. In contrast with the single-agent model of \cite{LiWangAIJ2021}, for each agent $i$ we have a special set of actions $\Act_i$ that are executable by $i$. Note that an action may be executable by multiple agents. Moreover, we implicitly assume each agent is aware of all the actions, even for those not executable by itself.  % (see Section \ref{sec.explicit-state} for the choice of using such models). 
	\begin{definition}[Model]\label{def.model}
		An Epistemic Transition System (i.e., a model) $\M$ is a tuple $$\lr{W,\{\sim_i\mid i\in\Ag\},\{\Act_i\mid i\in\Ag\},\{Q{(a)}\mid a\in\bigcup_{i\in\Ag} \Act_i \},V}$$ where:
		% \begin{itemize}
			% \item 
			$W$ is a non-empty set;
			% \item 
			$\Act_i$ is a set of actions for each $i\in\Ag$;
			% \item 
			$\sim_i \subseteq {W\times W}$ is an equivalence relation for each $i\in\Ag$;
			% \item 
			$Q{(a)}\ \subseteq {W\times W}$ is a binary relation for each $a\in \bigcup_{i\in\Ag} \Act_i$; 
			% \item 
			$V:W\to 2^\BP$ is a valuation.
		% \end{itemize}
			A frame is a model without the valuation function.
	\end{definition}
	\paragraph{Notations}
In the rest of the paper, \textbf{we use $\bigA$ to denote $\bigcup_{i\in\Ag}\Act_i$} for notational simplicity.
%if $\Ag$ is obvious from the context.
	We use $[s]^i$ to denote the set $\{t\in W\mid s\sim_i t\}$ and $[W]^i$ to denote the set $\{[s]^i\mid s\in W\}$. We sometimes call the equivalence class $[s]^i$ an $i$-belief-state. %, called an $i$-belief-state.
	% The binary relation $Q(a)$ on $W$  also can be seen as a function from $W$ to the power set of $W$, such that for each $s\in W$, $Q(a)(s)=\{t\in W\mid (s,t)\in Q(a)\}$. Thus, by abusing the notation, we sometimes also write $(s,t)\in Q(a)$ as $t\in Q(a)(s)$. 
    A model captures the agents' abilities to act and the uncertainty about the states. 
	%Note that in contrast to \cite{LangZ12,Naumov2018}, we do not assume that all the actions are executable on all the states.\footnote{Intuitively, not all the actions are executable under all the scenarios. Moreover, even as a technical assumption, assuming universal executability may make the unbounded linear plan never terminates, e.g., recall the plan 111\dots for opening a safe mentioned in the introduction.}
	
	${\EAL}^{\bigA}_\Ag$ formulas are given truth values on pointed epistemic transition systems.
	\begin{definition}[Semantics]
	Given a pointed model $(\M,s)$ where $\M=\lr{W,\{\sim_i\mid i\in\Ag\},\{\Act_i\mid i\in\Ag\},\{Q{(a)}\mid a\in \bigA %\bigcup_{i\in\Ag} \Act_i 
	\},V}$ and a formula $\phi\in{\EAL}^{\bigA}_\Ag$, the semantics is given below: 
	%satisfaction relation on $\phi$ and pointed model $(\M,s)$ is defined as follows:
		$$\begin{array}{|rll|}
		\hline
		\M,s\vDash \top & \Leftrightarrow & \textit{always}\\
		\M,s\vDash p & \Leftrightarrow & s\in \V(p)\\
       \M,s\vDash \neg\phi & \Leftrightarrow & \M,s\nvDash \phi\\
       \M,s\vDash (\phi\wedge\psi) &\LRA & \M,s\vDash\phi \text{ and } \M,s\vDash \psi\\
		\M,s\vDash \K_i\phi&\Leftrightarrow&\text{for all }t,\text{ if }s\sim_i t \text{ then } \M,t\vDash\phi\\
		\M,s\vDash [a]\phi&\Leftrightarrow& \text{for all }t,\text{ if }(s,t)\in Q(a) \text{ then } \M,t\vDash\phi\\
% 		\hline 
% 		\text{ where $Q(\pi)$ is (when $\pi\not=a)$}& &\\
% 		Q(?\phi)&:=&\{(s,s)\mid \M,s\vDash\phi \}\\
% 		Q(\pi_1\mpl\pi_2)&:=&Q(\pi_1)\cup Q(\pi_2)\\
% 		Q(\pi_1;\pi_2)&:=&Q(\pi_1)\circ Q(\pi_2)\\
% 		Q(\pi^*)&:=&\bigcup_{k\in\omega}Q(\pi^k)\\
		\hline 
		\end{array}$$
	 A formula is valid on a frame if it is true on all the pointed models based on that frame. 
	\end{definition}
	
Let $X$ be a set of states. We sometimes use $\M,X\vDash\phi$ to denote that $\M,s\vDash\phi$ for all $s\in X$. Thus $\M,X\nvDash\phi$ denotes that $\M,s\nvDash\phi$ for some $s\in X$, e.g., we may write $\M,[s]^i\vDash\K_i\phi$ or $\M,[s]^i\nvDash\K_i\phi$. 

	Here are some standard results about bisimulation that will be useful in the later technical discussion (cf.\ e.g., \cite{mlbook}).
	\begin{definition}[Bisimulation]\label{def.bis}
	Given a model $\M=\lr{W,\{\sim_i\mid i\in\Ag\},\{\Act_i\mid i\in\Ag\},\{Q{(a)}\mid a\in\bigcup_{i\in\Ag} \Act_i \},V}$,
	a {non-empty} \emph{symmetric} binary relation $Z$ on the domain of  $\M$ is called a {bisimulation} iff whenever $(w,v)\in Z$ the following conditions are satisfied:
\begin{itemize}
\item[(1).] For any $p\in\BP :p\in V(w)\iff p\in V(v)$.
\item[(2).] %For the bianry relation $\sim$:
    %\begin{itemize}
        %\item 
        for any $i\in\Ag$, if $w\sim_i w'$ for some $w'\in W$ then there exists a $v'\in W$ such that $v\sim_i v'$ and $w'Zv'$.
%\item if $v\sim  v'$ then there exists a $w'$ such that $w\sim w'$ and $w'Zv'$.
 %   \end{itemize}
\item[(3).] for any $a\in \bigcup\Act_i$,
    %\begin{itemize}
        %\item 
        if $(w, w')\in Q(a)$ for some $w'\in W$  then there exists a $v'\in W$ such that $(v,v')\in Q(a)$ and $w'Zv'$.
% \item if $(v, v')\in Q(a)$ then there exists a $w'$ such that $(w,w')\in Q(a)$ and $w'Zv'$.
%     \end{itemize}
\end{itemize} 

The pointed models $(\M,w)$ and $(\M,v)$ are said to be \textit{bisimilar} ($\M,w\bis\M,v$) if there is a bisimulation $Z$ such that $(w,v)\in Z.$ 
\end{definition}

We use  $\M,[s]^i\bis\M,[s']^i$ to denote that for each $t'\in[s']^i$,  there exists $t\in [s]^i$ such that $\M,t\bis\M,t'$ and vice versa. 
Given two pointed models $(\M,s)$ and $(\M,s')$,  we use $\M,s\equiv_{{\EAL}_\Ag^{\bigA}}\M,s'$ to denote that for each $\phi\in {\EAL}_\Ag^{\bigA}$, $\M,s\vDash\phi$ if and only if $\M,s'\vDash\phi$. Similarly, $\M,s\equiv_{\EAL_\Ag^{\bigA}|_{\K_i}}\M,s'$ denotes that for each $\K_i\phi\in \EAL_\Ag^{\bigA}$, $\M,s\vDash\K_i\phi$ if and only if $\M,s'\vDash\K_i\phi$.

Please note that $\M,s\bis\M,s'$ implies $\M,[s]^i\bis\M,[s']^i$,  but the other way around does not work in general. 
	
% 	\begin{proposition}\label{pro.BisImplEquiv}
% 		If $\M,s\bis\M,s'$, we have that $\M,s\equiv_{\Lan^{\EAL}_{\Act}}\M,s'$.
% 	\end{proposition}
As an adaption of the Hennessy-Milner theorem for modal logic (cf. e.g., \cite{mlbook}), we have:
	\begin{proposition}\label{pro.FiniteEquivImplBis} \label{pro.BisImplEquiv}
		Given a finite model $\M$, for each state $s$ in $\M$, we have that $\M,s\equiv_{\EAL_\Ag^{\bigA}}\M,s'$ iff  $\M,s\bis\M,s'$.
	\end{proposition}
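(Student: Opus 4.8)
The plan is to prove the standard Hennessy--Milner equivalence for this multi-modal language on finite models. The easy direction is that bisimilarity implies logical equivalence: this holds on \emph{all} models (finite or not) and follows by a routine induction on the structure of $\phi\in\EAL_\Ag^{\bigA}$. For the atomic and Boolean cases the bisimulation conditions~(1) and invariance under the connectives give the result immediately. For $\K_i\phi$, I would use condition~(2) together with symmetry of $Z$ to move a $\sim_i$-successor back and forth between the two bisimilar states; for $[a]\phi$ I would argue dually using condition~(3), transferring a $Q(a)$-successor. Since $[a]$ and $\K_i$ are both universal (box-type) modalities, one must be slightly careful to phrase the successor-matching in the contrapositive (if $\phi$ fails at some successor on one side, find a matching successor on the other side where it also fails), but this is standard.

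The substantive direction is that on a finite model, logical equivalence implies bisimilarity. Here I would define the relation $Z$ on the domain of $\M$ by $(s,s')\in Z$ iff $\M,s\equiv_{\EAL_\Ag^{\bigA}}\M,s'$, i.e.\ $s$ and $s'$ satisfy exactly the same $\EAL_\Ag^{\bigA}$-formulas, and show $Z$ is a bisimulation. This $Z$ is clearly non-empty (every state relates to itself) and symmetric. Condition~(1) is immediate, since propositional letters are themselves formulas. The work is in verifying the two back-and-forth conditions~(2) and~(3).

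For condition~(2): suppose $(s,s')\in Z$ and $s\sim_i t$; I must find $t'$ with $s'\sim_i t'$ and $(t,t')\in Z$. The key obstacle, and the only place finiteness is used, is the usual argument by contradiction: if no such $t'$ existed, then every $\sim_i$-successor $t'$ of $s'$ would fail to be logically equivalent to $t$, so for each such $t'$ there is a distinguishing formula $\psi_{t'}$ with $\M,t\vDash\psi_{t'}$ but $\M,t'\nvDash\psi_{t'}$. Because $\M$ is finite there are only finitely many $\sim_i$-successors $t'$ of $s'$ (indeed finitely many states up to logical equivalence), so the conjunction $\psi:=\bigwedge_{t'}\psi_{t'}$ is a genuine $\EAL_\Ag^{\bigA}$-formula; then $\M,t\vDash\psi$ gives $\M,s\vDash\lr{a}$-style witness, more precisely $\M,s\nvDash\K_i\neg\psi$ since $t$ is an $i$-successor of $s$ satisfying $\psi$, whereas $\M,s'\vDash\K_i\neg\psi$ since every $i$-successor of $s'$ falsifies $\psi$. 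This contradicts $(s,s')\in Z$. Condition~(3) is proved by the identical argument with $Q(a)$ in place of $\sim_i$ and the modality $[a]$ in place of $\K_i$, collecting the finitely many distinguishing formulas over the $Q(a)$-successors of $s'$.

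The main obstacle is precisely this finiteness-driven compactness step: building a single distinguishing formula from the finitely many $\psi_{t'}$ and then packaging it correctly under the relevant box modality to derive the contradiction. Everything else is bookkeeping. I would present conditions~(2) and~(3) together since their proofs are parallel, noting only the change of modality. With $Z$ shown to satisfy all three clauses, $Z$ is a bisimulation containing every pair of logically equivalent states, which yields $\M,s\bis\M,s'$ and completes the nontrivial direction.
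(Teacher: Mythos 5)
Your proof is correct and is exactly the standard Hennessy--Milner argument that the paper itself relies on: the paper offers no proof of this proposition, merely citing the Hennessy--Milner theorem for modal logic, and your argument (taking $Z$ to be logical equivalence, which is symmetric, and verifying the bisimulation clauses via a finite conjunction of distinguishing formulas placed under $\K_i$ or $[a]$) is precisely the textbook proof that citation points to. The only detail worth one extra sentence in a full write-up is the degenerate case where $s'$ has no $Q(a)$-successors at all (dispatched by the formula $[a]\bot$ or by the empty-conjunction convention); for $\sim_i$ this cannot arise since the relation is reflexive.
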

	Moreover, if we only look at the $\K_i\phi$ formulas, we have: 
	\begin{proposition}\label{pro.CorresBetweenEClassAndEFormula}
		Given a finite model $\M$, for each state $s$ in $\M$, we have that $\M,[s]^i\bis \M,[s']^i$ iff $ \M,s\equiv_{\EAL_\Ag^{\bigA}|_{\K_i}} \M,s' $.
	\end{proposition}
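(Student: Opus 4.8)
The plan is to reduce both directions of the biconditional to Proposition~\ref{pro.FiniteEquivImplBis}, which already identifies full $\EAL_\Ag^{\bigA}$-equivalence of single states with bisimilarity on a finite model. First I would unfold the two sides. By the clause for $\K_i$ in the semantics, $\M,s\equiv_{\EAL_\Ag^{\bigA}|_{\K_i}}\M,s'$ says precisely that for every $\phi\in\EAL_\Ag^{\bigA}$ we have $\M,t\vDash\phi$ for all $t\in[s]^i$ iff $\M,t'\vDash\phi$ for all $t'\in[s']^i$. So the task is to connect the pointwise matching of the two classes up to $\bis$ with their indistinguishability by universally read $\EAL$-formulas.

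For the forward direction ($\bis\Rightarrow\equiv$), assume $\M,[s]^i\bis\M,[s']^i$ and suppose $\M,s\vDash\K_i\phi$, i.e.\ $\M,t\vDash\phi$ for all $t\in[s]^i$. Given any $t'\in[s']^i$, the ``vice versa'' clause of $\M,[s]^i\bis\M,[s']^i$ supplies some $t\in[s]^i$ with $\M,t\bis\M,t'$; then the easy (bisimulation-preserves-formulas) direction of Proposition~\ref{pro.FiniteEquivImplBis} gives $\M,t'\vDash\phi$. As $t'$ was arbitrary, $\M,s'\vDash\K_i\phi$, and symmetry finishes this direction.

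The converse ($\equiv\Rightarrow\bis$) is the direction I expect to be the crux, and here finiteness of $\M$ is essential. I would argue contrapositively: if $\M,[s]^i\not\bis\M,[s']^i$ then, without loss of generality, there is a witness $t'\in[s']^i$ bisimilar to no point of $[s]^i$. By the hard direction of Proposition~\ref{pro.FiniteEquivImplBis}, for each $t\in[s]^i$ non-bisimilarity yields an $\EAL$-formula separating $t$ from $t'$, which after possibly negating I may take to hold at $t'$ but fail at $t$. Since $\M$ is finite, $[s]^i=\{t_1,\dots,t_n\}$ is finite, so I can form the single formula $\psi=\bigwedge_{k}\psi_{t_k}$, which holds at $t'$ yet fails at every point of $[s]^i$. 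Then $\neg\psi$ holds throughout $[s]^i$ but fails somewhere in $[s']^i$, giving $\M,s\vDash\K_i\neg\psi$ while $\M,s'\nvDash\K_i\neg\psi$, contradicting $\M,s\equiv_{\EAL_\Ag^{\bigA}|_{\K_i}}\M,s'$.

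Beyond bookkeeping the two-sided ``vice versa'' clause in the definition of class-bisimilarity, the only real subtlety is the reliance on finiteness to collapse the family of separating formulas into a single finite conjunction; without it the Hennessy--Milner step would not go through, exactly as in the single-state case underlying Proposition~\ref{pro.FiniteEquivImplBis}.
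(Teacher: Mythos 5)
Your proof is correct, and since the paper omits its own proof of this proposition (citing space constraints), there is nothing to diverge from: your reduction of both directions to Proposition~\ref{pro.FiniteEquivImplBis} is evidently the intended argument, given that the proposition is stated immediately after it as its $\K_i$-restricted analogue. In particular, you correctly isolate the two essential points: the class-to-class matching plus preservation under bisimilarity settles the easy direction, and the hard direction needs finiteness of $[s]^i$ to collapse the family of separating formulas (obtained from the hard direction of Proposition~\ref{pro.FiniteEquivImplBis}) into a single conjunction $\psi$, so that $\K_i\neg\psi$ witnesses the failure of $\equiv_{\EAL_\Ag^{\bigA}|_{\K_i}}$.
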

% 	\begin{proof}
% 		\textbf{Left to Right}: Assuming that $ \M,s\not\equiv_{\EAL_\Ag^{\bigA}|_{\K_i}} \M,s' $, it follows that there is $\K_i\phi$ such that $\M,s\vDash\K_i\phi$ but $\M,s'\nvDash\K_i\phi$. It follows by $\M,[s]^i\bis \M,[s']^i$ that there is some $u$ such that $s\sim_i u$ and $\M,u\bis\M,s'$. By $s\sim_i u$, we have that $\M,u\vDash\K_i\phi$. By $\M,u\bis\M,s'$ and Proposition~\ref{pro.BisImplEquiv}, we know that $\M,s'\vDash\K_i\phi$. This is contradictory to  $\M,s'\nvDash\K_i\phi$. Thus, if  $\M,[s]^i\bis \M,[s']^i$ then $ \M,s\equiv_{\EAL_\Ag^{\bigA}|_{\K_i}} \M,s' $.
		
% 		\textbf{Right to Left}: %Assume that $\M,[s]\not\bis \M,[s']$.
% 		Firstly, we will show that for each $t\in[s']^i$ there is some $u\in[s]^i$ such that $\M,u\bis\M,t$. If not, it follows that there is some $t\in [s']^i$ such that $\M,t\not\bis\M,u$ for each $u\in [s]^i$. By Proposition~\ref{pro.FiniteEquivImplBis}, we have that for each $u\in[s]^i$, there is some formula $\phi_u$ such that $\M,t\nvDash\phi_u$ but $\M,u\vDash\phi_u$. Since $\M$ is finite,  the disjunction of all these $\phi_u$ is a formula, denoted $\psi$. From this follows that $\M,s\vDash \K_i\psi$ and $\M,t\vDash\neg\psi$. Since $t\sim_i s'$, $\M,s'\vDash\neg\K_i\psi$. This is contradictory to  $\M,s\equiv_{\EAL_\Ag^{\bigA}|_{\K_i}}\M,s'$. Thus, we have shown that for each $t\in[s']^i$ there is some $u\in[s]^i$ such that $\M,u\bis\M,t$. From a similar process, we can show that for each $u\in[s]^i$ there is some $t\in[s']^i$ such that $\M,u\bis\M,t$. Therefore, $\M,[s]^i\bis\M,[s']^i$.
% 	\end{proof}
Most approaches of epistemic planning implicitly assume the property of perfect recall (cf. \cite[Sec. 6.5]{LiWangAIJ2021}), here we will also consider this extra property which will play a role in model checking. 	

\begin{definition}[Perfect recall]
	Given a model $\M$, for each $i\in\Ag$ and each $a\in\Act_i$,
	if $(s_1,s_2)\in Q(a)$ and $s_2\sim_i s_4$ then there must exist a state $s_3$ such that $s_1\sim_i s_3$ and $(s_3,s_4)\in Q(a)$. 
\end{definition}
In words, if the agent cannot distinguish two states after doing $a$ then it could not distinguish them before. 
Perfect recall (\texttt{PR}) corresponds to the following property depicted below: %(where dotted lines link states in the same uncertainty set): 
\begin{center}	
	\begin{minipage}{0.20\textwidth}	$$\xymatrix{{s_1}\ar[d]|a&\\
			s_2\ar@{..}[r]|i&s_4
			% \save "2,1"."2,2"!C="g1"*+[F--:<+20pt>]\frm{}
			% \restore
	 }$$
		%\caption{$\M,s_1$}
	\end{minipage} 
	\begin{minipage}{0.05\textwidth}
		$$\xrightarrow{\ZIG{a}}$$
	\end{minipage}
	\begin{minipage}{0.20\textwidth}	$$\xymatrix{{s_1}\ar[d]|a\ar@{..}[r]|i&s_3\ar[d]|a\\
			s_2\ar@{..}[r]|i&s_4
			% \save "1,1"."1,2"!C="g1"*+[F--:<+20pt>]\frm{}
			% \restore
			% \save "2,1"."2,2"!C="g1"*+[F--:<+20pt>]\frm{}
			% \restore
	 }$$
		%\caption{$\M,s_1$}
	\end{minipage} 
\end{center}
%In this paper, we will focus on models with perfect recall.

\section{A specification language for knowledge-based plans} \label{sec.specl}	

Inspired by \cite{LangZ12}, we only consider knowledge-based plans for each agent. We introduce the following specification language which is a multi-agent variant of a fragment of the programming language introduced in \cite{LiWangAIJ2021}.
%a knowledge-based programming language as the specification language of plans. %To stay general, we do not restrict ourselves to knowledge-based programs here.  
\begin{definition}[Knowledge-based specification language $\Prg(i)$]
% Given a set \Act\ of primitive actions, 
Based on $\EAL^{\bigA}_\Ag$, $\Prg(i)$ is defined as follows:
	$$\begin{array}{r@{\quad::= \quad}l}
		\pi &   \epsilon \mid a \mid  (\pi;\pi)\mid \ifthel{\K_i\phi}{\pi}{\pi} %\mid \whlo{\phi}{\pi}\\
		%  \psi & \top\mid p\mid \neg \psi\mid (\psi\wedge \psi)\mid \K\psi
		\end{array}$$
where {$\epsilon$ is the empty plan, $a\in \Act_i$ and $\phi\in{\EAL}^{\bigA}_\Ag$}.
% \begin{itemize}
%     \item the empty plan $\epsilon$ is a \Prg,
%     \item any action $a\in\Act$ is a \Prg,
%     \item if $\pi$ and $\pi'$ are \Prg s, then $\pi;\pi'$ is a \Prg,
%     \item for \Prg s $\pi$, $\pi'$ and $\phi\in\Lan^{\EAL}_{\Act}$, $\ifthel{\phi}{\pi}{\pi'}$ is a \Prg,
%     \item for a \Prg\ $\pi$ and $\phi\in\Lan^{\EAL}_{\Act}$, $\whlo{\phi}{\pi}$ is a \Prg.
% \end{itemize}
\end{definition}
Note that the atomic action $a$ in the above definition must belong to $i$, but the condition $\K_i\phi$ can contain epistemic modalities and actions of other agents, e.g., $\K_i\neg\K_j[b]p$ is a legitimate condition for a knowledge-based plan for $i$ even when $b\not\in \Act_i$. 

We will abbreviate $\ifthel{\K_i\phi}{\pi_1}{\pi_2}$ as $\shortIf{\K_i\phi}{\pi_1}{\pi_2}$.
We use $\pi^n$ to denote the program $\underbrace{\pi;\cdots;\pi}_{n}$, and $\pi^0$ is the empty program $\epsilon$.

Let $\M$ be an epistemic transition system, $s$ be a state in $\M$, and $\pi$ be a program in $\Prg(i)$. The state set $Q(\pi)(s)$, which is the set of states on which executing $\pi$ on $s$ will terminate, is defined by induction on $\pi$ as follows:
\[
	\begin{aligned}
		Q(\epsilon)(s) &= \{s\}\\
		Q(a)(s)&=\{t\mid (s,t)\in Q(a)\}\\
		Q(\pi_1;\pi_2)(s) &=\{v\mid\text{there is }t\in Q(\pi_1)(s): v\in Q(\pi_2)(t)\}\\
		Q(\shortIf{\K_i\phi}{\pi_1}{\pi_2})(s)&= 
		\begin{cases}
			Q(\pi_1)(s) & \M,s\vDash\K_i\phi\\
			Q(\pi_2)(s) & \M,s\nvDash\K_i\phi
		\end{cases}
	\end{aligned}
\]

We use $Q(\pi)([s]^i)$ to denote the set $\bigcup_{s'\in[s]^i}Q(\pi)(s')$.
% We use $\pi^n$ to denote the program $\underbrace{\pi;\cdots;\pi}_{n}$ and $\pi^0$ is the empty plan $\epsilon$.
Given the property of perfect recall, the resulting states of a knowledge-based plan have a nice property.  
\begin{proposition}\label{pro.terminationClosedOverSimI}
	If $\M$ has perfect recall and $\pi\in\Prg(i)$, then $Q(\pi)([s]^i)$ is closed over $\sim_i$.
\end{proposition}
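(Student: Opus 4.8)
The plan is to prove the claim by induction on the structure of the plan $\pi\in\Prg(i)$, establishing in each case that whenever $t\in Q(\pi)([s]^i)$ and $t\sim_i t'$ we also have $t'\in Q(\pi)([s]^i)$. Throughout I will use the elementary fact that the truth of a formula $\K_i\phi$ depends only on the $\sim_i$-class of the evaluation state, so that $\M,s'\vDash\K_i\phi$ takes the same truth value for every $s'\in[s]^i$; this follows immediately from the semantics of $\K_i$ together with $[s']^i=[s]^i$ for $s'\in[s]^i$.

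The two base cases are $\pi=\epsilon$ and $\pi=a$ with $a\in\Act_i$. For $\epsilon$ we simply observe that $Q(\epsilon)([s]^i)=[s]^i$, which is an $\sim_i$-equivalence class and hence trivially closed under $\sim_i$. The atomic case $\pi=a$ is where perfect recall does the real work, and I expect it to be the crux of the whole argument. Suppose $t\in Q(a)([s]^i)$, so there is some $s'\in[s]^i$ with $(s',t)\in Q(a)$, and suppose $t\sim_i t'$. Since $\pi\in\Prg(i)$ forces $a\in\Act_i$, I can apply perfect recall to $(s',t)\in Q(a)$ and $t\sim_i t'$ (instantiating $s_1:=s'$, $s_2:=t$, $s_4:=t'$), obtaining a state $s_3$ with $s'\sim_i s_3$ and $(s_3,t')\in Q(a)$. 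As $\sim_i$ is an equivalence relation and $s'\sim_i s$, transitivity gives $s_3\in[s]^i$, whence $t'\in Q(a)(s_3)\subseteq Q(a)([s]^i)$, as required.

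For the inductive step, the conditional case is straightforward: by the $\sim_i$-invariance of $\K_i\phi$ noted above, either all states of $[s]^i$ satisfy the guard or none do, so $Q(\shortIf{\K_i\phi}{\pi_1}{\pi_2})([s]^i)$ collapses to either $Q(\pi_1)([s]^i)$ or $Q(\pi_2)([s]^i)$, each closed under $\sim_i$ by the induction hypothesis. The composition case $\pi=\pi_1;\pi_2$ requires a little more care. First I would rewrite $Q(\pi_1;\pi_2)([s]^i)=\bigcup_{t\in T}Q(\pi_2)(t)$ where $T:=Q(\pi_1)([s]^i)$, which follows directly from unfolding the definition of $Q(\pi_1;\pi_2)$ over the union $\bigcup_{s'\in[s]^i}$. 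By the induction hypothesis for $\pi_1$, the intermediate set $T$ is closed under $\sim_i$, hence is a union of $\sim_i$-classes, say $T=\bigcup_j[t_j]^i$. Then $\bigcup_{t\in T}Q(\pi_2)(t)=\bigcup_j Q(\pi_2)([t_j]^i)$, and by the induction hypothesis for $\pi_2$ each $Q(\pi_2)([t_j]^i)$ is closed under $\sim_i$; since an arbitrary union of $\sim_i$-closed sets is again $\sim_i$-closed, this completes the case.

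The only genuinely non-routine step is the atomic base case, where perfect recall is exactly the property needed to ``pull back'' the $\sim_i$-uncertainty arising after executing $a$ to a choice of an alternative starting state already lying inside $[s]^i$. The composition case then merely propagates this closure, via the observation that an $\sim_i$-closed set decomposes into equivalence classes on which the induction hypothesis for $\pi_2$ can be reapplied.
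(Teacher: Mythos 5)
Your proof is correct: the structural induction, with perfect recall invoked exactly once in the atomic case $a\in\Act_i$, the $\sim_i$-invariance of the guard $\K_i\phi$ collapsing the conditional case, and the decomposition of the $\sim_i$-closed intermediate set into equivalence classes for sequential composition, is sound and complete. The paper omits its own proof of this proposition for space reasons, but your argument is precisely the standard one the statement calls for, so there is nothing to flag.
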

Next we will define a notion of strong executability which is a generalized version of the one in \cite{Wang2016}. 
\begin{definition}[Strong executability]
	We define a program $\pi$ to be \emph{strongly executable} on a pointed model $(\M,s)$ by induction on $\pi$ as follows:
	% \begin{itemize}
		% \item 
		(1)$\epsilon$ is always strongly executable on $\M,s$;
		% \item 
		(2)$a$ is strongly executable on $\M,s$ if $Q(a)(s)\neq\emptyset$;
		% \item 
		(3)$\pi_1;\pi_2$ is strongly executable on $\M,s$ if $\pi_1$ is strongly executable on $\M,s$ and $\pi_2$ is strongly executable on each $t\in Q(\pi_1)(s)$;
		% \item 
		(4)$\shortIf{\K_i\phi}{\pi_1}{\pi_2}$ is strongly executable  on $\M,s$ if either $\M,s\vDash\K_i\phi$ and $\pi_1$ is strongly executable on $\M,s$, or $\M,s\nvDash\K_i\phi$ and $\pi_2$ is strongly executable on $\M,s$.
	% \end{itemize}
	We say that $\pi$ is strongly executable on a set $X$ of states if it is strongly executable on $\M,s$ for each $s\in X$.
\end{definition}
To define the model checking algorithm later, we need to construct the tree of possible executions of a plan where each node is essentially an epistemic equivalence class (usually called a \textit{belief state} in automated planning). 
\begin{definition}[Execution tree]\label{def.executionTree}
	Given a model $\M$ and an agent $i\in\Ag$, %$\M=\lr{W,\{\sim_i\mid i\in\Ag\},\{\Act_i\mid i\in\Ag\},\{Q{(a)}\mid a\in\bigA \},V}$, 
	an execution tree of $\M$ for $i$ is a labeled tree $\T=\lr{N,E,L}$, where $\lr{N,E}$ is a tree consisting of a nonempty set $N$ of nodes and a set $E$ of edges and $L$ is a label function that labels each node in $N$ with an $i$-belief-state and each edge in $E$ with an action $a\in\Act_i$, such that, for all $(n,m)\in E$,
	% \begin{itemize}
		% \item 
		(1)$L(n,m)$ is strongly executable on $L(n)$; %there are $s\in L(n)$ and $t\in L(m)$ such that $(s,t)\in Q(L(n,m))$;
		% \item 
		(2)$L(n,m)=L(n',m')$ if $(n',m')\in E$ and $n=n'$;
		% \item 
		(3)there exists $t\in L(m)$ such that $t\in Q(L(n,m))(L(n))$;
		% \item  
		(4)for each $t\in Q(L(n,m))(L(n))$, there exists $k\in N$ such that $(n,k)\in E$ and $t\in L(k)$. % and $L(n,k)=L(n,m)$. 
	% \end{itemize}
	% We say that $T$ is deterministic if all edges starting from the same node are labeled the same action, i.e., if $n=n'$ then $L(n,m)=L(n',m')$.
\end{definition}

% Given an execution tree, 
We use $r^\T$ (or simply $r$) to refer to the root node of the execution tree $\T$.
The following two propositions show the relations between execution trees and knowledge-based plans.

% We say that an execution tree of $\M$, $T=\lr{D,E,L}$, is induced by a program $\pi\in\Prg(i)$, if the following hold:
% \begin{itemize}
% 	\item if $\pi$ is $\epsilon$, then $D=\{r\}, E=\emptyset$, and $L(r)\in [W]^i$. % for some state $s$.
% 	\item if $\pi$ is some $a\in\bigcup\Act_i$, then $L(r)\in [W]^i$, and for each node $r\neq n\in D$, we have that $(r,n)\in E$, $L(r,n)=a$, and $L(n)=[t]^i$ for some $t\in Q(a)([s]^i)$.
% 	\item if $\pi$ is $\pi_1;\pi_2$, then there is $D'\subseteq D$ such that $T|_{D'}$ is an execution tree induced by $\pi_1$ and for each terminal node $n$ of $T|_{D'}$, the subtree of $T$ generated by $n$ (denoted by $T|_n$) is an execution tree of $\pi_2$ and $T=T|_{D'}\cup (\bigcup\{T|_n\mid n$ is a terminal node of $T|_{D'}\})$.
% 	\item if $\pi$ is $\shortIf{\K_i\phi}{\pi_1}{\pi_2}$, then either $\M,s\vDash\K_i\phi$ and $T$ is an execution tree induced by $\pi_1$, or $\M,s\nvDash\K_i\phi$ and $T$ is an execution tree induced by $\pi_2$, where $s$ is some state in $L(r)$.
% \end{itemize}

\begin{proposition}\label{pro.fromProgramToExecutionTree}
	Given a model $\M$ and a program $\pi\in\Prg(i)$, if $\pi$ is strongly executable on $[s]^i$, then there is a finite execution tree $\lr{N,E,L}$ of $\M$ for $i$ such that %$T$ is  induced by $\pi$.
	% \begin{itemize}
		% \item  
		$L(r)=[s]^i$ and that
		% \item 
		for each leaf node $k$, $L(k)\subseteq Q(\pi)([s]^i)$.
	% \end{itemize}
\end{proposition}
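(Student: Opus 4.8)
The plan is to induct on the structure of $\pi$, maintaining at every stage the invariant that the constructed tree is rooted at a node labelled $[s]^i$ and that each of its leaves is labelled by an $i$-belief-state contained in $Q(\pi)([s]^i)$. The single global ingredient is Proposition~\ref{pro.terminationClosedOverSimI}: under perfect recall $Q(\pi')([s]^i)$ is closed under $\sim_i$ for every $\pi'\in\Prg(i)$, so any such termination set decomposes cleanly into whole $i$-belief-states. This closure is exactly what lets every leaf label be a genuine belief-state that still fits inside the relevant termination set, and it is the only place where perfect recall is used directly.

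For $\pi=\epsilon$ I take the single-node tree with root $r$ labelled $[s]^i$ and no edges; conditions (1)--(4) of Definition~\ref{def.executionTree} are vacuous, and the unique leaf, namely $r$ itself, satisfies $L(r)=[s]^i\subseteq Q(\epsilon)([s]^i)$. For $\pi=a$, strong executability gives $Q(a)(s')\neq\emptyset$ for each $s'\in[s]^i$. I create one child of $r$ for each $i$-belief-state contained in $Q(a)([s]^i)$, labelling every outgoing edge with $a$; this decomposition is legitimate precisely because $Q(a)([s]^i)$ is $\sim_i$-closed by Proposition~\ref{pro.terminationClosedOverSimI}. Conditions (1) and (2) are immediate, (3) holds since each child carries a non-empty belief-state inside $Q(a)([s]^i)$, and (4) holds since every $t\in Q(a)([s]^i)$ lies in $[t]^i$, which is one of the chosen children; finiteness follows from finiteness of $\M$.

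For $\pi=\pi_1;\pi_2$, taking the union of the defining clause over $s'\in[s]^i$ shows that $\pi_1$ is strongly executable on $[s]^i$ and $\pi_2$ is strongly executable on every $t\in Q(\pi_1)([s]^i)$. The induction hypothesis on $\pi_1$ yields a finite tree $\T_1$ rooted at $[s]^i$ whose leaves are labelled by belief-states $[t]^i\subseteq Q(\pi_1)([s]^i)$; since then $\pi_2$ is strongly executable on $[t]^i$, the induction hypothesis on $\pi_2$ yields a finite tree rooted at $[t]^i$ with leaves inside $Q(\pi_2)([t]^i)$. I glue by identifying each leaf of $\T_1$ with the root of the corresponding subtree (both carry the label $[t]^i$). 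Conditions (1)--(4) are inherited node-by-node, and no conflict arises at a seam because a glued node was a leaf of $\T_1$ and so contributes only the outgoing edges of the grafted root. For the leaf condition, a leaf $l$ of the glued tree satisfies $L(l)\subseteq Q(\pi_2)([t]^i)$, and using $[t]^i\subseteq Q(\pi_1)([s]^i)$ together with the identity $Q(\pi_1;\pi_2)([s]^i)=\bigcup_{t'\in Q(\pi_1)([s]^i)}Q(\pi_2)(t')$ gives $L(l)\subseteq Q(\pi_1;\pi_2)([s]^i)$.

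The conditional case $\pi=\shortIf{\K_i\phi}{\pi_1}{\pi_2}$ is where the knowledge-based design pays off: the truth value of $\K_i\phi$ is constant across $[s]^i$, since $\M,s'\vDash\K_i\phi$ depends only on $[s']^i=[s]^i$. Hence strong executability forces a single branch -- either $\M,[s]^i\vDash\K_i\phi$, in which case $Q(\pi)([s]^i)=Q(\pi_1)([s]^i)$ and $\pi_1$ is strongly executable on $[s]^i$, or the dual with $\pi_2$ -- and I simply return the tree given by the induction hypothesis for the relevant subprogram, so no branching on the test is ever needed and condition (2) is never threatened. I expect the main obstacle to be the bookkeeping in the sequential case: checking that gluing preserves all four conditions at the seam and, above all, propagating leaf-containment through $L(l)\subseteq Q(\pi_2)([t]^i)\subseteq Q(\pi_1;\pi_2)([s]^i)$, which silently depends on the $\sim_i$-closure from Proposition~\ref{pro.terminationClosedOverSimI} to keep every intermediate leaf label a legitimate belief-state.
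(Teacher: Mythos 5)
Your proof is correct: the structural induction on $\pi$, with Proposition~\ref{pro.terminationClosedOverSimI} supplying the $\sim_i$-closure needed in the atomic case (and the constancy of $\K_i\phi$ across $[s]^i$ collapsing the conditional case to a single branch), establishes exactly the stated conclusion, including at the delicate points where condition (4) of Definition~\ref{def.executionTree} must coexist with leaf-containment and where the seam conditions must be checked when grafting the $\pi_2$-trees onto the leaves of $\T_1$. The paper omits its own proof of this proposition for space reasons, but your argument is the natural one that its machinery (execution trees plus the perfect-recall closure property) is set up to support, and I see no gaps.
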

\begin{proposition}\label{pro.fromExecutionTreeToProgram}
	Let $\T=\lr{N,E,L}$ be a finite execution tree of $\M$ for $i$. % be an agent in $\Ag$. 
	% If $L(n)\in [W]^i$ for all $n\in D$ and $L(k,m)\in\Act_i$ for each $(k,m)\in E$, then 
	There exists $\pi\in\Prg(i)$ such that 
	% \begin{itemize}
		% \item 
		$\pi$ is strongly executable on $L(r)$ and that
		% \item 
		for each $t\in Q(\pi)(L(r))$, there exists a leaf node $k$ such that 
		$\M,[t]^i\bis\M,L(k)$.
	% \end{itemize}  
\end{proposition}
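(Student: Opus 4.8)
The plan is to build $\pi$ by induction on the (finite) structure of $\T$, reading off a knowledge-based plan from the root downward: at each internal node we first perform its unique outgoing action and then branch with a cascade of $\shortIf{\K_i\cdot}{\cdot}{\cdot}$ tests that route each resulting belief state into the plan obtained from the corresponding child. Concretely, I would prove the slightly stronger statement for \emph{every} node $n$ (so that it specializes to $n=r$): the subtree rooted at $n$ yields a program $\pi_n\in\Prg(i)$ that is strongly executable on $L(n)$ and such that every $t\in Q(\pi_n)(L(n))$ satisfies $\M,[t]^i\bis\M,L(k)$ for some leaf $k$ below $n$. In the base case $n$ is a leaf and $\pi_n:=\epsilon$; since $Q(\epsilon)(L(n))=L(n)$ and $L(n)=[t]^i$ for every $t\in L(n)$, the claim holds with $k=n$.

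For the inductive step, let $n$ be internal with children $m_1,\dots,m_\ell$. By condition (2) of Definition \ref{def.executionTree} all outgoing edges carry one and the same action $a:=L(n,m_1)\in\Act_i$, and by the induction hypothesis each child yields a program $\pi_{m_j}$ with the desired properties on $B_j:=L(m_j)$. I then set $\pi_n:=a;\sigma$, where $\sigma$ is a nested conditional whose guards are $\K_i$-formulas separating $B_1,\dots,B_\ell$, arranged so that a state $t$ with $[t]^i\bis B_j$ is routed to $\pi_{m_j}$ (bisimilar children being merged and sent to a common representative subprogram). Note that condition (4) directly supplies the covering of $Q(a)(L(n))$ by the child labels, so no separate closure property is needed here.

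The crux, and where I expect the main difficulty, is constructing these guards. By Proposition \ref{pro.CorresBetweenEClassAndEFormula} two belief states are bisimilar iff they agree on all formulas $\K_i\phi$, so on the finite model $\M$ any two non-bisimilar belief states are separated by some such formula; moreover $\K_i\phi_1\land\K_i\phi_2$ equals $\K_i(\phi_1\land\phi_2)$, so finite conjunctions of guards stay within the allowed test format. The subtlety is that separation may fail in the direction we want: it can happen that every $\K_i\phi$ true at $B_j$ is also true at $B_{j'}$, so $B_j$ cannot be singled out positively. To get around this I would order the (finitely many) bisimulation types of the $B_j$ by inclusion of their $\K_i$-theories and always peel off a type that is \emph{maximal} under this order: for a maximal $B_j$ and every remaining $B_{j'}$, since their $\K_i$-theories are distinct and theory$(B_j)\not\subseteq$ theory$(B_{j'})$, there is a $\K_i\phi_{j'}$ true at $B_j$ but false at $B_{j'}$, whence $\K_i\bigl(\bigwedge_{j'}\phi_{j'}\bigr)$ holds exactly at the type of $B_j$ among the remaining candidates. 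Iterating yields a guard cascade $\sigma=\shortIf{\K_i\chi_1}{\pi_{m_1}}{(\shortIf{\K_i\chi_2}{\pi_{m_2}}{\cdots})}$ (indices following the peeling order).

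It remains to verify the two required properties for $\pi_n=a;\sigma$, and the routine engine is the bisimulation-invariance of both strong executability and of the map $Q(\cdot)$, which follows by induction on programs from Proposition \ref{pro.FiniteEquivImplBis} (the guards $\K_i\phi$ are bisimulation-invariant and atomic transitions are matched by the back-and-forth conditions). For strong executability: $a$ is strongly executable on $L(n)$ by condition (1), and for every $t\in Q(a)(L(n))$ condition (4) gives a child $k$ with $t\in L(k)$, hence $[t]^i=L(k)=B_j$ for some $j$; the cascade then selects the branch of a representative $B_{j'}\bis B_j$, on which $\pi_{m_{j'}}$ is strongly executable by the induction hypothesis and bisimulation-invariance. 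For the terminal condition, $Q(\pi_n)(L(n))=\bigcup_{t\in Q(a)(L(n))}Q(\sigma)(t)$, and each $Q(\sigma)(t)=Q(\pi_{m_{j'}})(t)$ consists, by the induction hypothesis and bisimulation-invariance, of states whose $i$-belief-states are bisimilar to a leaf label below $m_{j'}$, hence below $n$. Transitivity of $\bis$ closes the argument, and specializing to $n=r$ gives the proposition.
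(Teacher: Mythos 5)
Your proof is correct. The paper omits its own proof of this proposition (it is among the proofs cut for space), so there is nothing to compare against line by line; but your route---induction on the finite tree, one action per internal node by condition (2) of Definition \ref{def.executionTree}, a cascade of $\shortIf{\K_i\chi}{\cdot}{\cdot}$ guards that discriminates the children's belief states up to bisimulation via Proposition \ref{pro.CorresBetweenEClassAndEFormula}, and a routine program-induction lemma giving bisimulation-invariance of strong executability and of $Q(\cdot)$---is exactly the natural (and almost certainly intended) argument. Indeed, the proposition is stated with $\bis$ rather than equality of leaf labels precisely because syntactic guards can only separate belief states up to logical equivalence, which is the crux you identified; note also that the separation step leans on finiteness of $\M$ (Hennessy--Milner), which is the paper's standing assumption.

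One simplification: the difficulty you flag about the \emph{direction} of separation is vacuous here, because each $\sim_i$ is an equivalence relation. If the $\K_i$-theory of $B_j$ were strictly contained in that of $B_{j'}$, pick $\K_i\phi$ true at $B_{j'}$ but not at $B_j$; then $\K_i\neg\K_i\phi$ holds at $B_j$ by negative introspection, hence at $B_{j'}$ by the inclusion, hence $\neg\K_i\phi$ holds at $B_{j'}$ by reflexivity---contradicting the choice of $\phi$. So distinct $\K_i$-theories of belief states are automatically incomparable, every bisimulation type is maximal, and the peeling order can be dispensed with: for any two distinct types there is a separating $\K_i$-formula in whichever direction you like, and any ordering of the guards works. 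Your maximality argument is sound, just more machinery than this model class requires (it is what would rescue the construction if one dropped the equivalence-relation assumption on $\sim_i$).
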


\section{Model checking knowledge-how}\label{sec.mc}
Extending the work of \cite{LiWangAIJ2021}, we introduce the multi-agent epistemic language of know-how and its semantics and study the model checking complexity in detail. 
\begin{definition}[\ELKh\ Language]
	Let $\BP$ be a set of variables and $\Ag$ be a set of agents, the language  of epistemic logic with the  know-how operator (\ELKh) is defined as follows:
		\[\phi::=\top\mid p\mid\neg\phi\mid (\phi\land\phi)\mid\K_i\phi\mid \Kh_i\phi \]
		where $p\in\BP$ and $i\in\Ag$.
\end{definition}

\begin{definition}[Semantics]\label{def.sem} The truth conditions for basic propositions,  Boolean connectives and the epistemic operator $\K_i$ are as before in the case of \EAL. 
% We give a generic form of ten different semantics $\vDash_x$ for the know-how operator $\Kh$ where $x$ is one of $$\texttt{PLAN=\{S, L, P, T, U, C, K, F, UK, CK}\}.$$ 
		%The first eight letters stand for simple, linear, skippable, stoppable, unbounded, conditional, knowledge-based, and full plans respectively.
		%the last two stand for unbounded and conditional knowledge-based plans.  
		%{$\texttt{UK}$ stands for unbounded knowledge-based plans, and $\texttt{CK}$ stands for conditional knowledge-based plans, i.e. knowledge-based plans without while-loop.}
		
		$$\begin{array}{|ll|}
		   \hline \M,s\vDash\Kh_i\phi\iff&\text{there is a plan }\pi\in\Prg(i)\text{ such that}\\
		&1.\ \pi\text{ is strongly executable on } [s]^i;\\
		&2.\ \M,t\vDash\phi \textsf{ for each }t\in Q(\pi)([s]^i).\\
		\hline
		\end{array}$$
\end{definition}

The readers can go back to Example \ref{ex.symptom} to verify the truth of the formula mentioned there.  

We can show that $\ELKh$ is invariant under bisimualtion defined in Definition \ref{def.bis} which also match labelled transitions (cf. \cite{LiWangAIJ2021}).
\begin{proposition}\label{pro.BisImplKnowHowEquiv}
	If $\M,s\bis\M,u$ then $\M,s\vDash\phi$ iff $\M,u\vDash\phi$ for all $\phi\in\ELKh$.
\end{proposition}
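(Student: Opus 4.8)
The plan is to proceed by induction on the structure of $\phi\in\ELKh$. The atomic case $p$ is immediate from clause (1) of Definition \ref{def.bis}, the Boolean cases $\neg$ and $\land$ follow at once from the induction hypothesis, and the case $\K_i\phi$ is the standard modal argument: fixing a bisimulation $Z$ witnessing $\M,s\bis\M,u$, any $i$-successor of $u$ is matched by clause (2) of Definition \ref{def.bis} to a bisimilar $i$-successor of $s$ and symmetrically, so $s$ and $u$ verify the same $\K_i$-formulas by the induction hypothesis. The only real work is in the $\Kh_i$ case.

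For the $\Kh_i$ case, the central tool is an auxiliary lemma on how the execution of a knowledge-based plan transfers along bisimulation. I would prove by induction on $\pi\in\Prg(i)$ that whenever $\M,v\bis\M,v'$: (a) $\pi$ is strongly executable on $(\M,v)$ iff it is strongly executable on $(\M,v')$; and (b) the terminal sets $Q(\pi)(v)$ and $Q(\pi)(v')$ are matched back-and-forth up to bisimulation, i.e., for each $w\in Q(\pi)(v)$ there is $w'\in Q(\pi)(v')$ with $\M,w\bis\M,w'$, and symmetrically. The empty-plan case is trivial; the atomic-action case $a$ follows directly from clause (3) of Definition \ref{def.bis} and its symmetric counterpart; sequential composition $\pi_1;\pi_2$ follows by threading the correspondence of part (b) through the two inductive steps. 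The delicate case is the conditional $\shortIf{\K_i\psi}{\pi_1}{\pi_2}$: here I use that $\K_i\psi\in\EAL_\Ag^{\bigA}$ and that $\EAL_\Ag^{\bigA}$ is invariant under bisimulation (the routine direction of Proposition \ref{pro.FiniteEquivImplBis}), so $\M,v\vDash\K_i\psi$ iff $\M,v'\vDash\K_i\psi$; hence the same branch is selected at $v$ and $v'$, and the claim reduces to the appropriate sub-plan.

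With the lemma in hand, the $\Kh_i$ case goes as follows. Suppose $\M,s\vDash\Kh_i\phi$ with witness plan $\pi$, so $\pi$ is strongly executable on $[s]^i$ and $\M,t\vDash\phi$ for every $t\in Q(\pi)([s]^i)$. Since $\M,s\bis\M,u$ implies $\M,[s]^i\bis\M,[u]^i$ (noted after Definition \ref{def.bis}), every $u'\in[u]^i$ is bisimilar to some $s'\in[s]^i$ and conversely. I claim the same $\pi$ witnesses $\M,u\vDash\Kh_i\phi$, plans being purely syntactic objects that may be reused. For strong executability on $[u]^i$: given $u'\in[u]^i$, pick $s'\in[s]^i$ with $\M,s'\bis\M,u'$; since $\pi$ is strongly executable on $s'$, part (a) of the lemma yields strong executability on $u'$. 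For the goal condition: given $t'\in Q(\pi)(u')$ for some $u'\in[u]^i$, choose $s'\in[s]^i$ with $\M,s'\bis\M,u'$; by part (b) there is $t\in Q(\pi)(s')\subseteq Q(\pi)([s]^i)$ with $\M,t\bis\M,t'$. As $\M,t\vDash\phi$ and $\phi$ is a proper subformula of $\Kh_i\phi$, the outer induction hypothesis gives $\M,t'\vDash\phi$. Hence $\M,u\vDash\Kh_i\phi$, and the converse direction is symmetric.

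I expect the main obstacle to be the auxiliary lemma, and within it the bookkeeping needed to thread the back-and-forth correspondence of part (b) through sequential composition while simultaneously maintaining part (a). The conditional is the conceptually subtle case — it is precisely where the \emph{knowledge-based} nature of the plans matters — but it is handled cleanly once bisimulation invariance of $\EAL_\Ag^{\bigA}$ is invoked. One point to be careful about is that the tests are evaluated at individual states encountered during execution rather than at belief-states, so invariance of the full $\EAL_\Ag^{\bigA}$ (rather than the weaker $\K_i$-restricted equivalence of Proposition \ref{pro.CorresBetweenEClassAndEFormula}) is exactly what is needed.
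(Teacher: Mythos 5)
The paper never gives a proof of this proposition --- it is one of the proofs omitted for space, with only a pointer to the single-agent treatment in \cite{LiWangAIJ2021} --- so there is nothing to compare against line by line; your argument is correct and is the natural one such a proof must take. The structure you propose (outer induction on $\phi$, with an auxiliary induction on $\pi\in\Prg(i)$ showing that strong executability and the terminal sets $Q(\pi)(\cdot)$ transfer along bisimulation, the conditional case discharged by bisimulation invariance of $\EAL^{\bigA}_\Ag$, and the lift from states to $i$-belief-states via the paper's remark that $\M,s\bis\M,u$ implies $\M,[s]^i\bis\M,[u]^i$) checks out in every case, including your correctly flagged subtlety that tests are evaluated at individual states and hence need invariance of the full language $\EAL^{\bigA}_\Ag$, not merely the $\K_i$-restricted equivalence of Proposition \ref{pro.CorresBetweenEClassAndEFormula}.
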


In the remainder of this section, we will propose a model checking algorithm for $\ELKh$. 
The key part of the model checking algorithm is to check the $\Kh_i$-formulas, and the problem of whether $\M,s\vDash\Kh_i\phi$ depends on whether there is a good plan for $\Kh_i\phi$. 
Our strategy is to reduce the problem of whether there is a good plan for $\Kh_i\phi$ to the nondeterministic planning problem in \cite{Cimatti2003}, which we will briefly introduce below.

\begin{definition}[Planning problem \cite{Cimatti2003}]
	Let $T=\lr{D,\Act,\{\rel{a}\mid a\in\Act\}}$ be a labeled transition system, where $D$ is a finite set, $\Act$ is a set of actions, and $\rel{a}\subseteq D\times D$ is a binary relation.
	A fully observable nondeterministic planning problem is a tuple $P=\lr{T,s,G}$ where $s\in D$ is the initial state, and $G\subseteq D$ is the goal set.
	% \begin{itemize}
	% 	\item 
	% 	\item $s\in D$ is the initial state;
	% 	\item $G\subseteq D$ is the goal set.
	% \end{itemize} 
\end{definition}
Given a labeled transition system $T$ (note that there is no epistemic relation), we often write $(s,t)\in\rel{a}$ as $s\rel{a}t$. The sequence $s_0\rel{a_1}s_1\rel{a_2}\cdots s_n$ is called an execution trace of $T$ from $s_0$. % if $(s_k,s_{k+1})\in \rel{a_{k+1}}$ for all $0\leq k<n$.
A partial function $f:D\to \Act$ is called a \textit{policy}. 
An execution trace $s_0\rel{a_1}s_1\rel{a_2}\cdots s_n$ is  induced by $f$ if $f(s_k)=a_{k+1}$ for all $0\leq k<n$. It is complete w.r.t. $f$ if $s_n$ is not in the domain of $f$.

\begin{definition}[Strong plan]
	A policy $f$ is a strong plan for a planning problem $P=\lr{T,s,G}$ if each complete execution trace induced by $f$ from $s$ is finite and terminates in $G$.
\end{definition}

\begin{proposition}\label{pro.PlanningInPtime}
	Let $\T=\lr{D,\Act,\{\rel{a}\mid a\in\Act\}}$.
	The existence of strong plans for a planning problem $P=\lr{T,s,G}$ is in \Ptime\ in the size of $T$.
\end{proposition}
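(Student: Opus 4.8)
The plan is to decide the existence of a strong plan by a standard backward least-fixed-point computation over $D$, and to read off the polynomial bound from the fact that the fixed point is reached within $|D|$ rounds. The only substantive work is proving that this fixed point coincides with the set of states admitting a strong plan.

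First I would define the \emph{strong preimage} of a set $X\subseteq D$: a state $t$ belongs to it if there is an action $a\in\Act$ that is \emph{executable} at $t$ (i.e.\ $\{t'\mid t\rel{a}t'\}\neq\emptyset$) and all of whose outcomes land in $X$ (i.e.\ $\{t'\mid t\rel{a}t'\}\subseteq X$). Starting from $W_0=G$ and setting $W_{k+1}=W_k\cup\{\,t\mid t\text{ is in the strong preimage of }W_k\,\}$, the sequence $(W_k)_k$ is monotone increasing and bounded by the finite set $D$, so it stabilises at some $W^\ast=\mathsf{Win}$ after at most $|D|$ rounds. Each round can be carried out in time polynomial in $|T|$ by scanning, for every state and every action, the corresponding outcome set; with at most $|D|$ rounds the whole computation is polynomial in the size of $T$. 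The algorithm then answers \emph{yes} iff $s\in\mathsf{Win}$, so it remains to prove $\mathsf{Win}=\{t\in D\mid\text{a strong plan for }\lr{T,t,G}\text{ exists}\}$.

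For soundness ($s\in\mathsf{Win}$ yields a strong plan) I would equip each $t\in\mathsf{Win}$ with its \emph{rank} $\rho(t)$, the least $k$ with $t\in W_k$; note $\rho(t)=0$ iff $t\in G$. Define a policy $f$ on $\mathsf{Win}\setminus G$ by letting $f(t)$ be a witnessing action for the membership $t\in W_{\rho(t)}$, and leaving $f$ undefined elsewhere. Along every transition chosen by $f$ the rank strictly decreases while staying inside $\mathsf{Win}$, so any trace induced by $f$ from $s$ remains in $\mathsf{Win}$, is finite (the ranks are bounded natural numbers that strictly decrease), and can only terminate where $f$ is undefined, i.e.\ at a rank-$0$ state, which lies in $G$. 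For completeness (a strong plan from $s$ forces $s\in\mathsf{Win}$) I would restrict attention to the states reachable from $s$ by following a given strong plan $f$; since $D$ is finite and every maximal induced trace from $s$ is finite and ends in $G$, the reachable execution structure under $f$ is acyclic, so each reachable $t$ carries a well-defined rank equal to the length of the longest remaining trace. A straightforward induction on this rank shows every reachable state lies in $\mathsf{Win}$: rank-$0$ states are terminal, hence in $G\subseteq W_0$; and a reachable state of positive rank has its $f$-chosen action executable with all outcomes of strictly smaller rank, hence in $\mathsf{Win}$ by the induction hypothesis, which places the state itself in the strong preimage.

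I expect the main obstacle to be the correctness argument rather than the complexity bound, which is routine once the fixed point is set up. The delicate points are (i) making the rank function do double duty---guaranteeing termination of the synthesised policy in the soundness direction and driving the induction in the completeness direction---and (ii) handling executability carefully, so that the policy never directs execution into a dead end from which $G$ is unreachable; this is exactly what the executability clause in the strong preimage secures.
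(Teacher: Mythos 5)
Your fixed-point procedure is precisely the paper's Algorithm~1 (\texttt{StrongPlanExistence}): the set $STs(SA)$ there is your $\mathsf{Win}$, the set $PreI$ computed in each round is your strong preimage (restricted to states not yet added), and the PTIME bound comes in both cases from the loop stabilising after at most $|D|+1$ rounds, each round being a polynomial scan over state--action pairs. Where you genuinely diverge is in how correctness is established: the paper's proof consists of citing Cimatti, Pistore, Roveri and Traverso for the termination and correctness of the procedure, whereas you prove correctness from scratch --- soundness by extracting a policy whose witnessing actions strictly decrease the rank $\rho$, so every induced trace stays in $\mathsf{Win}$ and must bottom out in a rank-$0$ state, i.e.\ in $G$; completeness by observing that the execution structure of a strong plan restricted to the states reachable from $s$ is acyclic, ranking states by the longest remaining trace, and inducting on that rank. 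Both halves are sound, and this buys a self-contained argument where the paper has only a pointer into the literature. The one place to be slightly careful is your completeness step ``rank-$0$ reachable states are terminal, hence in $G$'': this needs the (intended, and standard in Cimatti et al.) convention that a complete trace is one that cannot be extended, or equivalently that a policy never prescribes an action with no successors at a reachable state; under the paper's literal definition of completeness (last state outside the domain of $f$) an execution could get stuck at a state inside the domain of $f$ without violating the strong-plan condition, and you should exclude that degenerate case with a one-line remark.
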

\begin{proof}
It is shown in \cite{Cimatti2003} that the procedure presented in Algorithm \ref{algo.planning} always terminates and is correct for whether $P$ has a strong plan.
Moreover, the loop is executed at most $|D|+1$ times. Therefore, the procedure is in \Ptime.
\end{proof}

\begin{algorithm}[t]%\rm
	\SetKw{Not}{not}
	\SetKw{And}{and}
	\SetKw{Guess}{guess}
	\SetKwFunction{Mc}{MC}
	\SetKwFunction{Cr}{CR}
	\SetKwFunction{StrongPE}{StrongPlanExistence}
	\SetKwProg{Prg}{Procedure}{:}{}
	\SetAlgoVlined
	\Prg%(\tcc*[f]{ check whether $\M,w\vDashk\phi$})
	{\StrongPE($T,s, G$)}{
		$OldSA :=$ false\;
		$SA :=\emptyset$\;
		\While(\tcc*[f]{$STs(SA)$ is the state set $\{v\in D\mid (v,a)\in SA$ for some $a\in\BA\}$.}){$OldSA\neq SA$ \And $s\not\in G\cup STs(SA)$}{
			$PreI := \{(v,a)\in D\times\BA\mid v\not\in G\cup STs(SA), v$ has $a$-successors, and all $a$-successors of $v$ are in $G\cup STs(SA)\}$\;  %\emptyset\neq\rel{a}([s])\subseteq (G\cup STs(SA))$\; %\tcc*{Let }
			%  $ there exists $[t]$ such that $[s] \rel{a}[t]$, and if $[s]\rel{a}[v]$ then $[v]\in G\cup STs(SA)  \}$\;
			% $NewSA := \{([s],a)\in PreI\mid [s]\not\in G\cup Sts(SA)\}$\;
			$OldSA := SA$\;
			$SA := SA\cup PreI$\;
		}
		\lIf{$s\in G\cup STs(SA)$}{\Return true}
		\Return false\;
	}
		\caption{A planning procedure}\label{algo.planning}
\end{algorithm} 

%{\color{red}
%Secondly, %to reduce the existence of a good plan for $\Kh_i\phi$ in a model $\M$ to the existence of strong plan, 
Now we define the nondeterministic planning problem that corresponds to $\M,s\vDash\Kh_i\phi$.
%based on a model $\M$. 
Since the language of $\ELKh$ is a multi-agent one where different agents have different abilities, the corresponding nondeterministic planning problem must take the agent $i$ into account. This makes the reduction more complex than that of single-agent know-how logics.
%}

Given a model $\M=\lr{W,\{\sim_i\mid i\in\Ag\},\{\Act_i\mid i\in\Ag\},\{Q{(a)}\mid a\in\bigA \},V}$ and an agent $i$,
a planning problem for $i$ is $P(\M,i)=\lr{T(\M,i),[w]^i,G}$ where $[w]^i$ is an $i$-belief-state, $G$ is a set of $i$-belief-states, and $T(\M,i)=\lr{D,\Act_i,\{\rel{a}\mid a\in\Act_i\}}$ is a labeled transition system where 
% \begin{itemize}
	% \item 
	$D=[W]^i$ and
	% \item 
	for each $a\in\Act_i$, $[s]^i\rel{a}[t]^i\iff a$ is strongly executable on $[s]^i$, and there exists $t'\in[t]^i:t'\in Q(a)([s]^i)  \}$.
% \end{itemize}

We will show that the problem of whether $\M,s\vDash\Kh_i\phi$ indeed can be reduced to a nondeterministic planning problem for $i$ (Lemma \ref{lemma.equivalenceBetweenKnowHowAndPlanning}). Before that, we need two auxiliary propositions.

\begin{proposition}\label{pro.programImplyPlanningHasSolution}
	Given a model $\M$ and an agent $i$, if a program $\pi\in\Prg(i)$ is strongly executable on $[s]^i$, then the planning problem $P=\lr{T(\M,i),[s]^i,G}$ where $G=\{[v]^i\mid \M,[v]^i\bis\M,[t]^i$ for some $ t\in Q(\pi)([s]^i)\}$  has a strong plan.
\end{proposition}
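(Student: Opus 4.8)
The plan is to extract the desired strong plan directly from the finite execution tree guaranteed by Proposition~\ref{pro.fromProgramToExecutionTree}. Applying that proposition to $\pi$ (which is strongly executable on $[s]^i$) yields a finite execution tree $\T=\lr{N,E,L}$ of $\M$ for $i$ with $L(r)=[s]^i$ and with $L(k)\subseteq Q(\pi)([s]^i)$ for every leaf $k$. The first easy observation is that every leaf label is already a goal: if $k$ is a leaf then $L(k)=[u]^i$ with $u\in Q(\pi)([s]^i)$, and since $\M,[u]^i\bis\M,[u]^i$ holds trivially, $L(k)\in G$.

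Next I would turn $\T$ into a policy $f$ on the state space $D=[W]^i$ of $T(\M,i)$. The subtlety is that one belief-state $X$ may label several nodes of $\T$, while condition~(2) of Definition~\ref{def.executionTree} only guarantees a single out-action per node, not per belief-state. To resolve this and simultaneously secure termination, I would equip each node $n$ with its height $\mathit{ht}(n)$ (a leaf has height $0$, an internal node one more than the maximum height of its children) and, for each belief-state $X$ occurring in $\T$, set $\mu(X)=\min\{\mathit{ht}(n)\mid L(n)=X\}$. Define $f(X)$ to be undefined whenever $X\in G$, and otherwise (for $X$ occurring in $\T$) let $f(X)$ be the common out-label of some node $n_X$ realizing $\mu(X)$; on all remaining belief-states $f$ is left undefined. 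Because leaves are goals, any $X\notin G$ occurring in $\T$ occurs only at internal nodes, so $n_X$ is internal and $f(X)$ is genuinely defined.

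The core of the argument is a successor-correspondence lemma: if $X\notin G$, $n=n_X$, and $a=f(X)$, then the set $\{X'\mid X\rel{a}X'\}$ of $T(\M,i)$-successors of $X$ under $a$ is exactly $\{L(k)\mid (n,k)\in E\}$. The inclusion ``$\supseteq$'' is immediate from conditions~(1) and~(3) of Definition~\ref{def.executionTree}. For ``$\subseteq$'', given $X\rel{a}X'$ pick $t'\in X'$ with $t'\in Q(a)(X)$; condition~(4) supplies a child $k$ of $n$ with $t'\in L(k)$, and since a belief-state is determined by any of its members we get $X'=[t']^i=L(k)$. With this in hand, every $a$-successor $X'$ of $X$ equals $L(k)$ for a child $k$ of $n_X$, so $\mu(X')\le\mathit{ht}(k)<\mathit{ht}(n_X)=\mu(X)$. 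Thus $\mu$ strictly decreases along every trace induced by $f$, forcing all such traces to be finite; and since every successor of a tree-label is again a tree-label, a trace can only stop at a belief-state where $f$ is undefined, which by the definition of $f$ and the leaves-are-goals remark must lie in $G$. Hence every complete execution trace from $[s]^i$ is finite and terminates in $G$, so $f$ is a strong plan. I expect the successor-correspondence lemma, together with making the termination measure uniform across repeated occurrences of a belief-state (handled by taking $\mu$ to be the \emph{minimum} height, which is what rules out cycles), to be the main obstacle; the rest is bookkeeping with Definition~\ref{def.executionTree}.
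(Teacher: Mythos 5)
Your proof is correct and follows what is clearly the paper's intended route: the paper omits its own proof of this proposition, but it sets up Proposition~\ref{pro.fromProgramToExecutionTree} precisely so that a strongly executable program yields a finite execution tree that can be read off as a policy over the belief-state transition system $T(\M,i)$, which is exactly what you do. Your handling of the two genuine subtleties---that one belief-state may label several nodes with different out-actions, and that termination must be enforced, both resolved by taking the minimal-height measure $\mu$ and the successor-correspondence between $\rel{a}$-successors and children labels (conditions (1)--(4) of Definition~\ref{def.executionTree})---is sound, so there is nothing to correct.
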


\begin{proposition}\label{pro.planningHasSolutionImplyProgram}
	Given  $\M$ and $i$, if a planning problem for $i$, $P(\M,i)=\lr{T(\M,i),[s]^i,G}$, has a strong plan, then there exists $\pi\in\Prg(i)$ such that 
	% \begin{itemize}
		% \item 
		$\pi$ is strongly executable on $[s]^i$ and that
		% \item 
		for each $t\in Q(a)([s]^i)$, there exists $[v]^i\in G$ such that
		 $\M,t\bis\M,v$.
	% \end{itemize}
\end{proposition}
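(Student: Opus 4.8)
The plan is to convert the given strong plan into a finite execution tree and then read off the required program by invoking Proposition \ref{pro.fromExecutionTreeToProgram}. This delegates the delicate part -- turning a branching, belief-state-indexed policy into a single structured program with nested $\K_i$-tests -- entirely to the machinery already established there, so the real work is only to show that the unfolding of the policy is a legitimate execution tree of $\M$ for $i$ all of whose leaves are goal belief states.

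First I would fix a \emph{ranked} witness policy. By Proposition \ref{pro.PlanningInPtime}, $P(\M,i)=\lr{T(\M,i),[s]^i,G}$ has a strong plan iff Algorithm \ref{algo.planning} succeeds, so I may take $f$ to be the policy read off from the set $SA$ it computes: for each belief state $[u]^i\in STs(SA)$ pick some $a$ with $([u]^i,a)\in SA$. The iteration at which a pair first enters $SA$ induces a rank on the reachable belief states -- rank $0$ for those in $G$, and for $[u]^i\notin G$ a rank one greater than that of every $f([u]^i)$-successor, since the algorithm only adds $([u]^i,a)$ once all $a$-successors of $[u]^i$ already lie in $G\cup STs(SA)$. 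Morally this just records that a strong plan has no cycle through non-goal states (a cycle would yield a non-terminating execution); the stratification makes this precise and is exactly what will guarantee finiteness of the tree below.

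Next I would unfold $f$ from $[s]^i$ into a finite labelled tree $\T=\lr{N,E,L}$: the root is labelled $[s]^i$; at a node $n$ with $L(n)=[u]^i\notin G$ set $a=f([u]^i)$, label every outgoing edge with $a$, and create one child for each $a$-successor of $[u]^i$ in $T(\M,i)$; nodes labelled by goal belief states are made leaves. Finiteness is immediate from the rank (depth bounded by the maximal rank, branching by $|[W]^i|$). I then check the four clauses of Definition \ref{def.executionTree}. Clause (1) holds because, by the definition of $\rel{a}$ in $T(\M,i)$, an $a$-successor exists only when $a$ is strongly executable on $[u]^i$; clause (2) holds since all edges out of a node carry the single action $f([u]^i)$; clause (3) is precisely the witnessing state $t'\in[t]^i\cap Q(a)([u]^i)$ in the definition of $[u]^i\rel{a}[t]^i$. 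The load-bearing clause (4) -- that every $t\in Q(a)([u]^i)$ lies in some child's label -- is where perfect recall is used: by Proposition \ref{pro.terminationClosedOverSimI}, $Q(a)([u]^i)$ is closed under $\sim_i$, hence a union of belief states, each of which meets $Q(a)([u]^i)$ and is therefore an $a$-successor, i.e. a child; so the children's labels union to exactly $Q(a)([u]^i)$.

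Finally, Proposition \ref{pro.fromExecutionTreeToProgram} applied to $\T$ yields $\pi\in\Prg(i)$ strongly executable on $L(r)=[s]^i$ with the property that for every $t\in Q(\pi)([s]^i)$ there is a leaf $k$ with $\M,[t]^i\bis\M,L(k)$. Since all leaves are labelled by goal belief states, $L(k)=[v]^i$ for some $[v]^i\in G$; unfolding the definition of belief-state bisimulation at the particular point $t\in[t]^i$ gives a state $v'\in[v]^i$ with $\M,t\bis\M,v'$, and $[v']^i=[v]^i\in G$, which is exactly the claimed conclusion. I expect the main obstacle to be establishing that the unfolding is a genuine execution tree: finiteness must be squeezed out of the strong-plan guarantee (handled cleanly by the algorithm's stratification), and clause (4) hinges essentially on perfect recall via Proposition \ref{pro.terminationClosedOverSimI}; once both are secured the statement is a direct corollary of Proposition \ref{pro.fromExecutionTreeToProgram}.
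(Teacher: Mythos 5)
Your route---extract the policy computed by Algorithm \ref{algo.planning}, unfold it from $[s]^i$ into a finite execution tree whose leaves are goal belief states, then let Proposition \ref{pro.fromExecutionTreeToProgram} produce $\pi$---is exactly what the paper's scaffolding is designed for (the paper omits its own proof of this proposition, but Proposition \ref{pro.fromExecutionTreeToProgram} exists precisely to discharge it), and your closing step correctly converts the belief-state bisimilarity $\M,[t]^i\bis\M,L(k)$ with $L(k)=[v]^i\in G$ into a pointwise $\M,t\bis\M,v'$ with $[v']^i\in G$. You also read the proposition's ``$Q(a)([s]^i)$'' correctly as the evident typo for $Q(\pi)([s]^i)$. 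So the structure is sound; there are two points to repair in the details.

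First, the ranking step has a real (though easily fixed) flaw: if $f([u]^i)$ is an \emph{arbitrary} action $a$ with $([u]^i,a)\in SA$, the claimed strict decrease of rank can fail. You rank a belief state by the first iteration at which \emph{some} pair containing it enters $SA$, but your chosen pair $([u]^i,f([u]^i))$ may enter much later, and the $PreI$ condition only forces its $a$-successors into $G\cup STs(SA)$ as of that later iteration; a successor's rank can therefore exceed the rank of $[u]^i$, and finiteness of the unfolding is no longer guaranteed. The repair is one line: take $f([u]^i)$ to be the action of a pair entering at the \emph{earliest} iteration for $[u]^i$ and rank by that iteration (equivalently, induct on the number of loop iterations, showing every state in $G\cup STs(SA_k)$ roots a finite execution tree with goal leaves).

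Second, perfect recall is not where clause (4) of Definition \ref{def.executionTree} comes from: for any $t\in Q(a)([u]^i)$, the state $t$ itself witnesses the existential in the definition of $[u]^i\rel{a}[t]^i$, so $[t]^i$ is an $a$-successor and hence one of your children, and it contains $t$. Proposition \ref{pro.terminationClosedOverSimI} yields the stronger fact that the children's labels union to \emph{exactly} $Q(a)([u]^i)$, but clause (4) needs only containment, so the dependence you single out as essential is dispensable (harmless here, since the paper restricts to models with perfect recall anyway). What your construction does genuinely need, and uses only implicitly, is that every non-goal node of the unfolding lies in $STs(SA)$, so that $f$ is defined there and has at least one successor; this follows from the $PreI$ condition together with the algorithm returning true at $[s]^i$, and is worth stating as an explicit invariant.
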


\begin{lemma}\label{lemma.equivalenceBetweenKnowHowAndPlanning}
	Given a model $\M$,
	we have that
	$\M,s\vDash\Kh_i\phi$ iff the planning problem $P(\M,i)=\lr{T(\M,i),[s]^i,G}$ where $G=\{[t]^i\mid \M,[t]^i\vDash\K_i\phi\}$ has a strong plan.
\end{lemma}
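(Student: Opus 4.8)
The plan is to prove the biconditional in Lemma~\ref{lemma.equivalenceBetweenKnowHowAndPlanning} by establishing each direction separately, using the two auxiliary propositions (Proposition~\ref{pro.programImplyPlanningHasSolution} and Proposition~\ref{pro.planningHasSolutionImplyProgram}) as the bridges between the world of knowledge-based plans in $\Prg(i)$ and the world of strong plans for the nondeterministic planning problem $P(\M,i)$. The key observation that glues everything together is the specific choice of goal set $G=\{[t]^i\mid \M,[t]^i\vDash\K_i\phi\}$: I need to show this goal set captures exactly the semantic condition ``$\M,t\vDash\phi$ for each $t\in Q(\pi)([s]^i)$'' appearing in the definition of $\Kh_i\phi$. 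The main engine is the perfect-recall-based fact (Proposition~\ref{pro.terminationClosedOverSimI}) that $Q(\pi)([s]^i)$ is closed under $\sim_i$, which lets me freely pass between individual states and their $i$-belief-states.

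For the left-to-right direction, I would assume $\M,s\vDash\Kh_i\phi$, so by Definition~\ref{def.sem} there is a $\pi\in\Prg(i)$ strongly executable on $[s]^i$ with $\M,t\vDash\phi$ for every $t\in Q(\pi)([s]^i)$. Applying Proposition~\ref{pro.programImplyPlanningHasSolution} with this $\pi$ yields a strong plan for $P=\lr{T(\M,i),[s]^i,G'}$ where $G'=\{[v]^i\mid \M,[v]^i\bis\M,[t]^i \text{ for some } t\in Q(\pi)([s]^i)\}$. The crux here is to argue $G'\subseteq G$, which then gives a strong plan for the problem with the larger goal set $G$ (any strong plan whose complete traces terminate in $G'$ also terminates in $G$). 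To see $G'\subseteq G$: if $\M,[v]^i\bis\M,[t]^i$ for some $t\in Q(\pi)([s]^i)$, then since $Q(\pi)([s]^i)$ is closed over $\sim_i$ we have $\M,[t]^i\vDash\K_i\phi$ (because $\phi$ holds at all $\sim_i$-related states in the termination set), and bisimilarity of the belief-states together with Proposition~\ref{pro.CorresBetweenEClassAndEFormula} or Proposition~\ref{pro.BisImplKnowHowEquiv} transfers $\K_i\phi$ to $[v]^i$, so $[v]^i\in G$.

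For the right-to-left direction, I would assume $P(\M,i)=\lr{T(\M,i),[s]^i,G}$ with $G=\{[t]^i\mid \M,[t]^i\vDash\K_i\phi\}$ has a strong plan. Applying Proposition~\ref{pro.planningHasSolutionImplyProgram} produces a $\pi\in\Prg(i)$ that is strongly executable on $[s]^i$ such that for each $t\in Q(\pi)([s]^i)$ there is some $[v]^i\in G$ with $\M,t\bis\M,v$. Since $[v]^i\in G$ means $\M,[v]^i\vDash\K_i\phi$, in particular $\M,v\vDash\phi$ (reflexivity of $\sim_i$), and by bisimulation invariance $\M,t\vDash\phi$. As this holds for every $t\in Q(\pi)([s]^i)$, the witness $\pi$ satisfies both clauses of the semantics of $\Kh_i\phi$, giving $\M,s\vDash\Kh_i\phi$.

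I expect the main obstacle to be the careful bookkeeping around the choice of goal set, since the two auxiliary propositions are stated with goal sets defined up to bisimulation ($G'$ in Proposition~\ref{pro.programImplyPlanningHasSolution}) rather than the semantically-defined $G$ of the lemma. The real work is verifying that moving from $G'$ to $G$ preserves having a strong plan in the forward direction, which hinges on the inclusion $G'\subseteq G$ and the monotonicity of strong-plan existence under enlarging the goal set; and symmetrically, in the backward direction, that the statement of Proposition~\ref{pro.planningHasSolutionImplyProgram} (whose conclusion mentions only a single action $Q(a)$ rather than a general $\pi$) can be correctly marshalled to recover a full plan $\pi$ whose termination set lands inside states satisfying $\phi$. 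I would be alert to whether the closure of $Q(\pi)([s]^i)$ under $\sim_i$ is genuinely needed to equate ``all states satisfy $\phi$'' with ``all belief-states satisfy $\K_i\phi$'', since this is precisely where the perfect-recall assumption enters.
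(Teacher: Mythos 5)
Your proposal is correct and follows essentially the same route as the paper's own proof: both directions use Propositions~\ref{pro.programImplyPlanningHasSolution} and \ref{pro.planningHasSolutionImplyProgram} as the bridges, with the forward direction passing through the bisimulation-closed goal set $G'$, establishing $G'\subseteq G$ via Proposition~\ref{pro.terminationClosedOverSimI} and bisimulation invariance (Proposition~\ref{pro.BisImplKnowHowEquiv}), and invoking monotonicity of strong-plan existence in the goal set, exactly as the paper does. The subtleties you flag (the $Q(a)$ versus $Q(\pi)$ reading of Proposition~\ref{pro.planningHasSolutionImplyProgram}, and the role of perfect recall in equating pointwise $\phi$ with belief-state-wise $\K_i\phi$) are handled the same way in the paper.
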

\begin{proof}
	If $\M,s\vDash\Kh_i\phi$, it follows that there exists $\pi\in\Prg(i)$ such that $\pi$ is strongly executable on $[s]^i$ and that $\M,t\vDash\phi$ for each $t\in Q(\pi)([s]^i)$. 
	By Proposition \ref{pro.terminationClosedOverSimI}, it follows that $\M,[t]^i\vDash\K_i\phi$ for each $t\in Q(\pi)([s]^i)$.
	Let $G'$ be the set $\{[v]^i\mid \M,[v]^i\bis\M,[t]^i$ for some $t\in Q(\pi)([s]^i)\}$. By Proposition~\ref{pro.BisImplKnowHowEquiv}, we have that $\M,[v]^i\vDash\K_i\phi$ for each $[v]^i\in G'$. It follows that $G'\subseteq G$. 
	Let $P'$ be the planning problem $\lr{T(\M,i),[s]^i,G'}$. By Proposition~\ref{pro.programImplyPlanningHasSolution}, the planning problem $P'$ has a strong plan. Since $G'\subseteq G$, it follows that the planning problem $P(\M,i)$ als has a strong plan.

	If the planning problem $P(\M,i)$ has a strong plan, by Proposition \ref{pro.planningHasSolutionImplyProgram}, it follows that there exists $\pi\in\Prg(i)$ such that $\pi$ is strongly executable on $[s]^i$ and that for each $t'\in Q(a)([s]^i)$, there exists $[t]^i\in G$ such that $\M,t'\bis\M,t$. Since $\M,t\vDash\phi$ for each $[t]^i\in G$, by Proposition \ref{pro.BisImplKnowHowEquiv}, it follows that $\M,t'\vDash\phi$ for each $t'\in Q(a)([s]^i)$. We then have that $\M,s\vDash\Kh_i\phi$. 
\end{proof}

Finally, we are ready to show the upper bound of model checking $\ELKh$.

\begin{theorem}\label{theo.complexityOfMC}
	The model checking problem of \ELKh\ is in \Ptime\ in the size of models.
\end{theorem}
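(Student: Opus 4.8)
The plan is to prove the PTIME upper bound by a standard bottom-up labelling of the model by the subformulas of the input formula $\phi$, using the reduction of Lemma~\ref{lemma.equivalenceBetweenKnowHowAndPlanning} to handle the $\Kh_i$ case and Proposition~\ref{pro.PlanningInPtime} to discharge each such subgoal in polynomial time. Concretely, I would fix a pointed model $(\M,s)$ and the formula $\phi$ whose truth we wish to decide, enumerate the subformulas $\psi_1,\dots,\psi_m$ of $\phi$ in order of increasing size (so $m\le|\phi|$), and compute for each $\psi_j$ the set $\llbracket\psi_j\rrbracket=\{w\in W\mid \M,w\vDash\psi_j\}$ of states satisfying it, assuming inductively that the truth sets of all smaller subformulas are already available.

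**First I would** dispose of the easy cases. For $\top$, $p$, $\neg\psi$, and $\psi\land\chi$ the truth set is computed in time $O(|W|)$ by a direct table lookup / Boolean operation on the already-computed sets. For $\K_i\psi$, I compute $\llbracket\K_i\psi\rrbracket=\{w\mid [w]^i\subseteq\llbracket\psi\rrbracket\}$, which takes time polynomial in $|W|$ given the equivalence relation $\sim_i$ (one pass over the relation per state, or a pass over each equivalence class). The interesting case is $\Kh_i\psi$. Here I invoke Lemma~\ref{lemma.equivalenceBetweenKnowHowAndPlanning}: $\M,w\vDash\Kh_i\psi$ iff the planning problem $P(\M,i)=\lr{T(\M,i),[w]^i,G}$ with $G=\{[t]^i\mid \M,[t]^i\vDash\K_i\psi\}$ has a strong plan. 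Since $\psi$ is a proper subformula, its truth set is already computed, so $G$ can be assembled in polynomial time (it is simply the set of $i$-belief-states all of whose members satisfy $\psi$, read off from $\llbracket\psi\rrbracket$ and the classes $[W]^i$). I then build the labelled transition system $T(\M,i)=\lr{[W]^i,\Act_i,\{\rel{a}\}}$, whose state space $[W]^i$ has size at most $|W|$ and whose transitions are determined by strong executability of single actions $a\in\Act_i$ on belief-states; each such edge is decidable in polynomial time from the relations $Q(a)$ and $\sim_i$. Finally I run Algorithm~\ref{algo.planning} once per candidate initial belief-state, which by Proposition~\ref{pro.PlanningInPtime} runs in time polynomial in $|T(\M,i)|$, hence polynomial in $|W|$.

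**The key bookkeeping step** is to bound the total running time. Constructing $T(\M,i)$ and the goal set $G$ is polynomial, and deciding strong-plan existence for a fixed initial belief-state is polynomial by Proposition~\ref{pro.PlanningInPtime}. Since there are at most $|W|$ belief-states to test as initial states (to fill in $\llbracket\Kh_i\psi\rrbracket$ completely we may run the planning procedure for each $[w]^i$, or equivalently read off the winning region from a single run of the fixed-point computation in Algorithm~\ref{algo.planning}), the whole $\Kh_i\psi$ case costs a polynomial in $|W|$. Summing over the at most $|\phi|$ subformulas, the overall model-checking procedure is polynomial in $|W|$ and in $|\phi|$, and in particular in \Ptime\ in the size of the model.

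**The main obstacle I anticipate** is making precise that the planning-problem machinery operates on the \emph{quotient} structure of $i$-belief-states rather than on raw states, and that this quotient has size controlled by $|W|$ while faithfully capturing the $\Kh_i$ semantics; this is exactly what Lemma~\ref{lemma.equivalenceBetweenKnowHowAndPlanning} together with Propositions~\ref{pro.programImplyPlanningHasSolution} and \ref{pro.planningHasSolutionImplyProgram} is designed to guarantee, so the heavy lifting is already done. A secondary point of care is verifying that constructing the edge relation $\rel{a}$ of $T(\M,i)$ — which requires checking strong executability of an atomic action on an entire belief-state — remains polynomial; but since atomic actions have the simple strong-executability criterion $Q(a)(s)\neq\emptyset$ applied across a class, this is routine. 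With these ingredients the bound follows by a clean induction on formula structure.
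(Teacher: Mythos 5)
Your proof is correct and follows essentially the same route as the paper: both reduce the $\Kh_i$ case to strong-plan existence via Lemma~\ref{lemma.equivalenceBetweenKnowHowAndPlanning} and discharge it with the PTIME planning procedure of Proposition~\ref{pro.PlanningInPtime}. The only difference is presentational: the paper's Algorithm~\ref{algo.modelChecking} is a top-down recursive procedure on pointed models, whereas you label all states bottom-up by subformulas, which additionally makes explicit that the running time is polynomial in the formula size as well as in the model size.
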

\begin{proof}
	Algorithm \ref{algo.modelChecking} presents the model checking algorithm \texttt{MC} for $\ELKh$ on epistemic transition systems with perfect recall.
	By Lemma \ref{lemma.equivalenceBetweenKnowHowAndPlanning}, we call the procedure $\texttt{StrongPlanExistence}$ presented in Algorithm \ref{algo.planning} to check whether $\M,w\vDash\Kh_i\phi$. 
	Proposition \ref{pro.PlanningInPtime} shows that the procedure $\texttt{StrongPlanExistence}$ can be computed in \ptime\ of the size of $T(\M,i)$. Since the size of $T^\M$ is bounded by the size of $\M$, it follows that Algorithm \ref{algo.planning} is in \ptime\ in the size of $\M$. 
\end{proof}

\begin{algorithm}[t]%\rm
	\SetKw{Not}{not}
	\SetKw{And}{and}
	\SetKw{Guess}{guess}
	\SetKwFunction{Mc}{MC}
	\SetKwFunction{Cr}{CR}
	\SetKwFunction{StrongPE}{StrongPlanExistence}
	\SetKwProg{Prg}{Procedure}{:}{}
	\SetAlgoVlined
	\Prg(\tcc*[f]{whether $\M,w\vDash\phi$}){\Mc{$\M,w,\phi$}}{
		\Switch(\tcc*[f]{Boolean cases are omitted}){$\phi$}{    
			
			% \uCase{$p$}
			% {
			% 	\eIf{$p \in V(s)$}
			% 	{
			% 		\Return true\;
			% 	}
			% 	{
			% 		\Return false\;
			% 	}
			% }
			% %\uCase{$\top$}
			% %{
			% %	\Return true\
			% %}
			% \uCase{$\neg \phi$}
			% {	\Return \Not{MC$(\M, (e,s), \phi)$}
			% 	%\eIf{MC$(\calM, s, \sigma, \phi) = $ false}
			% 	%{
			% 	%	\Return true\
			% 	%}
			% 	%{
			% 	%	\Return false\
			% 	%}
			% }
			% \uCase{$\phi_1 \land \phi_2$}
			% {
			% 	\Return MC$(\M, (e,s),\phi_1)$ \And MC$(\M, (e,s), \phi_2)$\;
			% 	%\eIf{MC$(\calM, s, \sigma, \phi_1) = $ true  {\bf and} MC$(\calM, s, \sigma, \phi_2) = $ true}
			% 	%{
			% 	%	\Return true\
			% 	%}
			% 	%{
			% 	%	\Return false\
			% 	%}
			% }
			\uCase{$\K_i \phi$}
			{
				\ForEach{$w'\in [w]^i$}
				{
%					\ForEach{$s'\in L(e')$}
%					{
						\lIf{\Not \Mc{$\M,w',\phi$}}{
							\Return false
						}
%					}
				}
				\Return true\;				
			}
			\Case{$\Kh_i\phi$}
			{
				$G:=\emptyset$\;
				\ForEach{$[s]^i$ in $T(\M,i)$}{
					\lIf{\Mc($\M,s,\K_i\phi$)}{ $G:=G\cup\{[s]^i\}$ }
				}
				\Return \StrongPE~($T(\M,i),[w]^i, G$)\;
			}
		}
	}
\caption{A model checking procedure}\label{algo.modelChecking}
\end{algorithm}

\section{Axiomatization}\label{sec.ax}
In this section, we axiomatize our logic of know-how over finite models with perfect recall. Note that we cannot apply the generic completeness result in \cite{LiWangAIJ2021} since our setting is not only multi-agent but also with an extra condition of perfect recall. It turns out the axiomatization is the multi-agent variant of the system in \cite{Fervari2017} which has an extra axiom $\AxKhtoKhK$ than the proof system in \cite{LiWangAIJ2021}. Note that although the proof system is similar, the semantics of the $\Kh$ in \cite{Fervari2017} is based on model-dependent functions from equivalence classes to actions instead of the syntactically specified plan in our work. 

The proof system $\SLKhc$ is presented in Table \ref{tab.axiomatics}.
%\begin{center}
	\begin{table}[htbp]
		\begin{minipage}[t]{0.5\textwidth}
	\begin{tabular*}{0.55\textwidth}{lc}
		%		\multicolumn{2}{c}{System $\SKh$}\\
		% \lcline{1-2}\rcline{3-4}
		{\textbf{Axioms}}&\\
		%\lcline{1-2}\rcline{3-4}
		\TAUT & \text{all axioms of propositional logic}\\
		\DISTK & $\K_i p\land\K_i (p\to q)\to \K_i q$\\
		\AxTrK& $\K_i p\to p $ \\
		\AxTransK& $\K_i p\to\K_i\K_i p$\\
		\AxEucK& $\neg \K_i p\to\K_i\neg\K_i p$\\
		\AxKtoKh &$\K_i p \to \Kh_i p$ \\	
		\AxKhtoKhK&$\Kh_i p \to \Kh_i\K_i p$  \\
		\AxKhtoKKh&$\Kh_i p \to \K_i\Kh_i p$  \\	
		%\AxKhEucK&$\neg\Khx p \to \K\neg\Khx p$ && \phantom{$\dfrac{\phi}{\Box\phi}$} \\
		\AxKhKh&$\Kh_i \Kh_i p \to \Kh_i p$  \\
		\AxKhbot&$\neg \Kh_i \bot$  \\
	\end{tabular*}
\end{minipage}
\begin{minipage}[t]{0.35\textwidth}
	\begin{tabular*}{0.35\textwidth}{lclc}
		\textbf{Rules}\\
		\MP & $\dfrac{\varphi,\varphi\to\psi}{\psi}$&\NECK &$\dfrac{\varphi}{\K_i\varphi}$\\
		\EQREPKh& $\dfrac{\varphi\to\psi}{\Kh_i\varphi\to\Kh_i\psi}$ & \SUB & $\dfrac{\varphi}{\varphi[\psi\slash p]}$
	\end{tabular*}
\end{minipage}
	\caption{$\SLKhc$}\label{tab.axiomatics}
\end{table}
%\end{center}

% \noteYW{We sould highlight the differences bewtween the proof here and proof in \cite{LiWangAIJ2021}. We may omit some proofs in this subsection. Maybe we should also explain why don't we need the more general composition axiom as in \cite{LiWangAIJ2021}.}

% Compared to the proof system presented in \cite{LiWangAIJ2021}, we have an extra axiom $\AxKhtoKhK$ in $\SLKhc$, because we focus on models with perfect recall in this paper. 
The property of perfect recall is sufficient to guarantee the validity of $\AxKhtoKhK$, but the axiom cannot characterize the property, as it is shown in \cite{LiWangAIJ2021}.
Moreover, the composition axiom $\AxKhKh$ in $\SLKhc$ is a weaker version of the axiom $\Kh(\phi\lor\Kh\psi)\to\Kh(\phi\lor\psi)$ in \cite{LiWangAIJ2021}.

Due to the axiom $\AxKhtoKKh$, the stronger composition axiom in \cite{LiWangAIJ2021} can be derived in $\SLKhc$ based on $\AxKhKh$.
\begin{proposition}
	$\vdash \Kh_i(\phi\lor\Kh_i\psi)\to\Kh_i(\phi\lor\psi)$.
\end{proposition}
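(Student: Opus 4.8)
The plan is to reduce the statement to the single instance $\Kh_i(\chi\lor\Kh_i\chi)\to\Kh_i\chi$ with $\chi:=\phi\lor\psi$, and then to prove that instance by feeding the introspection axioms into the composition axiom $\AxKhKh$. First I would establish the reduction. Since $\psi\to(\phi\lor\psi)$ is a tautology, the rule $\EQREPKh$ gives $\Kh_i\psi\to\Kh_i(\phi\lor\psi)$; combined with $\phi\to(\phi\lor\psi)$ this yields, purely propositionally, $(\phi\lor\Kh_i\psi)\to(\chi\lor\Kh_i\chi)$. Applying $\EQREPKh$ once more lifts this to $\Kh_i(\phi\lor\Kh_i\psi)\to\Kh_i(\chi\lor\Kh_i\chi)$, so it remains only to derive $\Kh_i(\chi\lor\Kh_i\chi)\to\Kh_i\chi$.

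The heart of the argument is the auxiliary claim $\K_i(\chi\lor\Kh_i\chi)\to\Kh_i\chi$. To prove it I would first derive the negative-introspection principle $\neg\Kh_i\chi\to\K_i\neg\Kh_i\chi$: the axiom $\AxTrK$ yields $\neg\Kh_i\chi\to\neg\K_i\Kh_i\chi$, the Euclidean axiom $\AxEucK$ applied to $\Kh_i\chi$ yields $\neg\K_i\Kh_i\chi\to\K_i\neg\K_i\Kh_i\chi$, and the contrapositive of $\AxKhtoKKh$ pushed under $\K_i$ via $\NECK$ and $\DISTK$ yields $\K_i\neg\K_i\Kh_i\chi\to\K_i\neg\Kh_i\chi$; chaining these gives the principle. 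Given it, I assume $\K_i(\chi\lor\Kh_i\chi)$ and argue by cases on $\Kh_i\chi$: in the case $\neg\Kh_i\chi$ I obtain $\K_i\neg\Kh_i\chi$, so distributing $\K_i$ over the conjunction $(\chi\lor\Kh_i\chi)\land\neg\Kh_i\chi$ gives $\K_i\chi$ and hence $\Kh_i\chi$ by $\AxKtoKh$, which is absurd. Thus $\Kh_i\chi$ holds in either case, establishing the claim.

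Finally I would chain the pieces. The axiom $\AxKhtoKhK$ gives $\Kh_i(\chi\lor\Kh_i\chi)\to\Kh_i\K_i(\chi\lor\Kh_i\chi)$; applying $\EQREPKh$ to the auxiliary claim converts this into $\Kh_i\K_i(\chi\lor\Kh_i\chi)\to\Kh_i\Kh_i\chi$; and $\AxKhKh$ collapses $\Kh_i\Kh_i\chi$ to $\Kh_i\chi$. Composing with the reduction of the first paragraph delivers $\Kh_i(\phi\lor\Kh_i\psi)\to\Kh_i(\phi\lor\psi)$. I expect the main obstacle to be precisely the auxiliary claim $\K_i(\chi\lor\Kh_i\chi)\to\Kh_i\chi$, since it is here that the $\mathtt{S5}$ strength of $\K_i$ (through $\AxEucK$) and the $\Kh$-to-$\K$ interaction $\AxKhtoKKh$ must be combined to manufacture a negative-introspection principle for $\Kh_i$ that is not itself listed among the axioms; once that is in hand, the outer steps are routine uses of $\EQREPKh$, $\AxKhtoKhK$, and $\AxKhKh$.
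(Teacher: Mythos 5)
Your proof is correct: every step is a legitimate use of the axioms and rules of $\SLKhc$ (together with $\SUB$), and it follows exactly the route the paper signposts --- the paper omits the detailed derivation but remarks that the proposition follows from $\AxKhKh$ ``due to the axiom $\AxKhtoKKh$'', which is precisely how you deploy it, namely to manufacture the negative-introspection principle $\neg\Kh_i\chi\to\K_i\neg\Kh_i\chi$ (via $\AxTrK$, $\AxEucK$, $\AxKhtoKKh$) and hence the key lemma $\K_i(\chi\lor\Kh_i\chi)\to\Kh_i\chi$, after which $\AxKhtoKhK$, $\EQREPKh$ and $\AxKhKh$ close the argument. Your preliminary reduction to the single-variable instance $\chi:=\phi\lor\psi$ is a harmless cosmetic variant of the standard derivation, not a genuinely different approach.
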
 
% \begin{proof}
% By \AxKhtoKhK, it follows that $(1)\vdash\Kh_i(\phi\lor\Kh_i\psi)\to \Kh_i\K_i(\phi\lor\Kh_i\psi)$.
% Since $\vdash \Kh_i\psi\lra\K_i\Kh_i\psi$, with axioms and rules about $\K_i$, it can be shown that $\vdash \K_i(\phi\lor\Kh_i\psi)\to \K_i\phi\lor\Kh_i\psi$. 
% With \AxKtoKh, we then have that $\vdash \K_i(\phi\lor\Kh_i\psi)\to \Kh_i\phi\lor\Kh_i\psi$. 
% With \EQREPKh, we then have that  $\vdash \Kh_i\K_i(\phi\lor\Kh_i\psi)\to \Kh_i(\Kh_i\phi\lor\Kh_i\psi)$.
% With (1), we then have that $(2)\vdash \Kh_i(\phi\lor\Kh_i\psi)\to \Kh_i(\Kh_i\phi\lor\Kh_i\psi)$.

% Since $\vdash\phi\to(\phi\lor\psi)$, by \EQREPKh, it follows that $\vdash \Kh_i\phi\to\Kh_i(\phi\lor\psi)$.
% Similarly, we have that $\vdash \Kh_i\psi\to\Kh_i(\phi\lor\psi)$. These imply that $\vdash \Kh_i\phi\lor\Kh_i\psi\to\Kh_i(\phi\lor\psi)$. 
% By \EQREPKh,, we then have that $\vdash \Kh_i(\Kh_i\phi\lor\Kh_i\psi)\to\Kh_i\Kh_i(\phi\lor\psi)$.
% With (2),  it follows that $\vdash \Kh_i(\phi\lor\Kh_i\psi)\to \Kh_i\Kh_i(\phi\lor\psi)$.
% With \AxKhKh, we then have that $\vdash \Kh_i(\phi\lor\Kh_i\psi)\to \Kh_i(\phi\lor\psi)$.
% \end{proof}
%{\color{red}

Next, we will show that the proof system $\SLKhc$ is sound and complete over finite models with perfect recall.
By Propositions \ref{pro.fromProgramToExecutionTree} and \ref{pro.fromExecutionTreeToProgram}, it follows that there is a correspondence between execution trees and plans. With this correspondence result, we can  significantly simplify the proofs of the soundness and completeness of the proof system $\SLKhc$ than \cite{Fervari2017} and \cite{LiWangAIJ2021}. 
More specifically, the validity of Axiom \AxKhKh, which was highly non-trivial in \cite{Fervari2017}, follows from the fact that two subsequent execution trees can be combined into one execution tree. (The detailed proof can be found in the following theorem of soundness.)
For the completeness, the corresponding execution tree plays an important role in the proof of the truth lemma (Lemma \ref{lem.truth}).
%}

\begin{theorem}[Soundness]
	The proof system $\SLKhc$ is sound over finite models with perfect recall. % with respect to\ $\vDashx$ for all $x\in \allindex.$
	\end{theorem}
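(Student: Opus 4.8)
The plan is to verify that every axiom of $\SLKhc$ is valid on finite models with perfect recall and that every inference rule preserves validity. The $\mathsf{S5}$ part of the system — \TAUT, \DISTK, \AxTrK, \AxTransK, \AxEucK\ together with \MP, \NECK\ and \SUB\ — is handled by the standard modal argument, since each $\sim_i$ is an equivalence relation; I would dispatch these first and without comment. The rule \EQREPKh\ is equally direct: if $\varphi\to\psi$ is valid then any plan whose terminal states all satisfy $\varphi$ also has all terminal states satisfying $\psi$, so a witness for $\Kh_i\varphi$ is already a witness for $\Kh_i\psi$.

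Next I would treat the genuinely know-how-specific axioms one at a time. For \AxKtoKh, the empty plan $\epsilon$ is strongly executable on every $[s]^i$ with $Q(\epsilon)([s]^i)=[s]^i$; hence if $\K_i p$ holds at $s$ then every terminal state satisfies $p$ and $\epsilon$ witnesses $\Kh_i p$. For \AxKhbot, I would first record that strong executability of $\pi$ on $[s]^i$ forces $Q(\pi)([s]^i)\neq\emptyset$ (a routine induction on $\pi$, using clause (2) for atomic actions); since $\bot$ holds nowhere, no strongly executable plan can witness $\Kh_i\bot$. For \AxKhtoKKh, the key observation is that the truth of $\Kh_i p$ at $s$ depends only on the belief state $[s]^i$, because the witnessing plan is required to be strongly executable on the whole class $[s]^i$ and its terminal set is computed from that class; therefore $\Kh_i p$ is $\sim_i$-invariant and $\K_i\Kh_i p$ follows. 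For \AxKhtoKhK, I would invoke Proposition \ref{pro.terminationClosedOverSimI}: under perfect recall $Q(\pi)([s]^i)$ is closed under $\sim_i$, so whenever every terminal state satisfies $p$, every terminal state in fact satisfies $\K_i p$, and the same plan witnesses $\Kh_i\K_i p$.

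The main obstacle is the composition axiom \AxKhKh, which I would prove semantically through the correspondence between plans and execution trees. Assuming $\M,s\vDash\Kh_i\Kh_i p$, fix a witnessing plan $\pi_1$ for the outer $\Kh_i$; by Proposition \ref{pro.fromProgramToExecutionTree} it yields a finite execution tree $\T_1$ rooted at $[s]^i$ whose every leaf label $L(k)$ is a belief state contained in $Q(\pi_1)([s]^i)$, at which $\Kh_i p$ therefore holds. For each leaf $k$ I pick a witnessing plan for the inner $\Kh_i p$ on $L(k)$ and, again via Proposition \ref{pro.fromProgramToExecutionTree}, an execution tree $\T_k$ rooted at $L(k)$; grafting each $\T_k$ onto the matching leaf of $\T_1$ produces a single finite execution tree $\T$ rooted at $[s]^i$. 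Applying Proposition \ref{pro.fromExecutionTreeToProgram} to $\T$ gives a plan $\pi\in\Prg(i)$ that is strongly executable on $[s]^i$ and whose every terminal state is bisimilar to some leaf label of $\T$, i.e., of some $\T_k$, where $p$ holds; bisimulation invariance (Proposition \ref{pro.BisImplKnowHowEquiv}) then yields $\M,t\vDash p$ for every $t\in Q(\pi)([s]^i)$, so $\pi$ witnesses $\Kh_i p$ at $s$. The care needed here — matching each inner tree's root to the correct leaf belief state, and checking that grafting preserves all four execution-tree conditions of Definition \ref{def.executionTree} — is exactly where the work lies, and it is precisely what replaces the intricate direct argument of \cite{Fervari2017}.
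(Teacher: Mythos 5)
Your proof is correct and takes essentially the same route as the paper's: the simple axioms are handled identically (empty plan for \AxKtoKh, non-emptiness of terminal sets for \AxKhbot, dependence of $\Kh_i$ only on the belief state $[s]^i$ for \AxKhtoKKh, Proposition \ref{pro.terminationClosedOverSimI} for \AxKhtoKhK), and \AxKhKh\ is proved by the same grafting of execution trees via Propositions \ref{pro.fromProgramToExecutionTree} and \ref{pro.fromExecutionTreeToProgram} followed by bisimulation invariance (Proposition \ref{pro.BisImplKnowHowEquiv}). The only cosmetic difference is that the paper records $\K_i\phi$ rather than just $\phi$ at the leaves of the combined tree before converting it back into a plan; both variants go through, since bisimulation invariance covers the full language.
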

\begin{proof}
The validity of axioms $\AxTrK, \AxTransK, \AxEucK$ is due to the standard semantics for $\K_i$. 
%For the validity of other axioms, we just need to use the two properties in Proposition  \ref{prop.khcondition} shared by all the $\vDashx$ semantics. 

The axiom $\AxKtoKh$ says if $p$ is known then you know how to achieve $p$. Its validity is guaranteed the empty program. 
The axiom $\AxKhtoKhK$ is valid due to Proposition \ref{pro.terminationClosedOverSimI}.
The axiom $\AxKhtoKKh$ is the positive introspection axiom for $\Kh_i$ whose validity is due to the semantics for $\Kh_i$. 
The axiom $\AxKhbot$ says we cannot guarantee contradiction, which indirectly requires that the plan should terminate at someplace and is guaranteed by the condition (1) of the semantics for $\Kh_i$. 

Finally, we will show the validity of the axiom $\AxKhKh$, that is, $\M,s\vDash\Kh_i\phi$ if $\M,s\vDash\Kh_i\Kh_i\phi$.
If $\M,s\vDash\Kh_i\Kh_i\phi$, it follows that there is a program $\pi\in\Prg(i)$ such that $\pi$ is strongly executable on $[s]^i$ and that $\M,t\vDash\Kh_i\phi$ for each $t\in Q(\pi)([s]^i)$.
By Proposition \ref{pro.fromProgramToExecutionTree}, there is a finite execution tree $\T$ of $\M$ for $i$ such that $L(r)=[s]^i$ and that for each leaf node $k$, $L(k)\subseteq Q(\pi)([s]^i)$.
It follows that $\M,L(k)\vDash\Kh_i\phi$ for each leaf node $k$. We then have that for each leaf node $k$, there is a program $\pi_k\in\Prg(i)$ such that $\pi_k$ is strongly executable on $L(k)$ and that $\M,v\vDash\phi$ for all $v\in Q(\pi_k)(L(k))$.
By Proposition \ref{pro.fromProgramToExecutionTree} again, for each leaf $k$ of $\T$, there is a finite execution tree $\T_k$ of $\M$ for $i$ such that $L^{\T_k}(r^{\T_k})=L(k)$ and that for each leaf node $l$ of $\T_k$, $L^{\T_k}(l)\subseteq Q(\pi_k)(L(k))$.
Moreover, since $\M,v\vDash\phi$ for all $v\in Q(\pi_k)(L(k))$, it follows that $\M,L^{\T_k}(l)\vDash\K_i\phi$ for each leaf node $l$ of $\T_k$.
We then construct an execution tree $\T'$ of $\M$ for $i$ by extending each leaf node $k$ of $\T$ with $\T_k$.
Since $\T$ and all $\T_k$ are finite, it follows that $\T'$ is finite.
Moreover, we have that the root of $\T'$ is labeled with $[s]^i$ and $\M,L^{\T'}(l)\vDash\K_i\phi$ for each leaf node $l$ of $\T'$.
By Proposition \ref{pro.fromExecutionTreeToProgram}, there is a program $\pi'\in\Prg(i)$ such that $\pi'$ is strongly executable on $[s]^i$ and that for each $v\in Q(\pi')([s]^i)$, there exists a leaf node $l$ of $\T'$ such that $\M,[v]^i\bis\M,L^{\T'}(l)$. Since $\M,L^{\T'}(l)\vDash\K_i\phi$ for each leaf node $l$ of $\T'$, by Proposition \ref{pro.BisImplKnowHowEquiv}, we have that $\M,v\vDash\K_i\phi$ for all $v\in Q(\pi')([s]^i)$. Thus, we have that $\M,s\vDash\Kh_i\phi$.
\end{proof}

Next, we will show the completeness of $\SLKhc$.
The key is to show that every consistent formula is satisfiable. We will construct a canonical model that consists of all atoms which are maximal consistent sets with respect to a closure set of formulas and show that every consistent formula is satisfied in the canonical model.

Given a set of formulas $\Delta$, let:
$\Delta|_{\K_i}=\{\K_i\phi\mid\K_i\phi\in\Delta \}$, 
$\Delta|_{\neg\K_i}=\{\neg\K_i\phi\mid\neg\K_i\phi\in\Delta \}$, 
$\Delta|_{\Kh_i}=\{\Kh_i\phi\mid\Kh_i\phi\in\Delta \}$, 
$\Delta|_{\neg\Kh_i}=\{\neg\Kh_i\phi\mid\neg\Kh_i\phi\in\Delta \}$.
Let $\SFC$ be a subformula-closed set of $\ELKh$-formulas that is finite. %The closure $\cl{\SFC}$ is $\SFC\cup\{\K\phi,\neg\K\phi\mid\phi\in\SFC\}$.  
	%It is obvious that $\SFC$ is finite. %countable since the whole language itself is countable. 
% The set $\cl{\SFC}$ is the closure $\SFC\cup\{\K_i\phi,\neg\K_i\phi\mid\phi\in\SFC,i\in\Ag$ occurs in $\SFC\}$. 
\begin{definition}\label{def.closure}
	The closure $\cl{\SFC}$ is defined as:
	\begin{center}
		$\SFC\cup\{\K_i\phi,\neg\K_i\phi\mid\phi\in\SFC,i\in\Ag$ occurs in $\SFC\}$. 
	\end{center}	 
\end{definition}
	If $\SFC$ is finite, so is $\cl{\SFC}$.
	Next, we will use it to build a canonical model with respect to\ $\Phi$.
\begin{definition}[Atom]\label{def.atom}
	We enumerate the formulas in $\cl{\SFC}$ by $\{\psi_0,\cdots, \psi_h\}$ where $h\in \mathbb{N}$. The formula set $\Delta=\{Y_k\mid k\leq h\}$ is an atom of $\cl{\SFC}$ if 
	\begin{itemize}
		\item $Y_k=\psi_k$ or $Y_k=\neg\psi_k$ for each  $\psi_i\in \cl{\SFC}$;
		\item $\Delta$ is consistent in $\SLKhc$.
	\end{itemize}
\end{definition}

The following two propositions are similar with their single-agent versions in \cite{Fervari2017}. 
%We present the detailed proofs in Appendix \ref{proof.pro.exeistenceLemmaForK} and \ref{proof.pro.AtomShareTheSameKh} respectively.

\begin{proposition}\label{pro.exeistenceLemmaForK}
Let $\Delta$ be an atom of $\cl{\SFC}$ and $\K_i\phi\in\cl{\SFC}$. If $\K_i\phi\not\in \Delta$ then there exists $\Delta'$ such that $\Delta'|_{\K_i}=\Delta|_{\K_i}$ and $\neg\phi\in\Delta'$.
\end{proposition}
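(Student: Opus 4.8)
The plan is to treat this as the standard existence (``diamond'') lemma for an \textsf{S5}-style modality $\K_i$, adapted to the finite closure $\cl{\SFC}$, with the work concentrated in one consistency argument. First I would observe that since $\Delta$ is an atom it decides every formula of $\cl{\SFC}$, and because $\K_i\phi\in\cl{\SFC}$ with $\K_i\phi\notin\Delta$, we have $\neg\K_i\phi\in\Delta$. I would also record that $\phi\in\cl{\SFC}$: as $\SFC$ is subformula-closed and $\K_i\phi\in\cl{\SFC}$ arises either from $\SFC$ itself or from the clause adding $\K_i\psi$ with $\psi\in\SFC$, in both cases $\phi\in\SFC\subseteq\cl{\SFC}$, so any atom must decide $\phi$. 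The overall strategy is then to assemble the ``epistemic kernel'' of $\Delta$ for agent $i$ together with $\neg\phi$, show this set is consistent, extend it to an atom $\Delta'$, and finally verify the two required properties.

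Concretely, I would set $T=\Delta|_{\K_i}\cup\Delta|_{\neg\K_i}\cup\{\neg\phi\}$ and argue that $T$ is $\SLKhc$-consistent. This is the main obstacle, and it is exactly where the introspection axioms are needed. Suppose for contradiction $T$ is inconsistent; since $\cl{\SFC}$ is finite, there are $\K_i\alpha_1,\dots,\K_i\alpha_m\in\Delta$ and $\neg\K_i\beta_1,\dots,\neg\K_i\beta_n\in\Delta$ with $\vdash\theta\to\phi$, where $\theta=\bigwedge_{j}\K_i\alpha_j\wedge\bigwedge_{k}\neg\K_i\beta_k$. Applying $\NECK$ and $\DISTK$ gives $\vdash\K_i\theta\to\K_i\phi$. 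Using $\AxTransK$ on each conjunct $\K_i\alpha_j$ and $\AxEucK$ on each conjunct $\neg\K_i\beta_k$, together with the derivable distribution of $\K_i$ over conjunctions, yields $\vdash\theta\to\K_i\theta$, and hence $\vdash\theta\to\K_i\phi$. Since every conjunct of $\theta$ belongs to $\Delta$, this gives $\Delta\vdash\K_i\phi$, contradicting $\neg\K_i\phi\in\Delta$ and the consistency of $\Delta$. Thus $T$ is consistent.

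To finish, I would extend $T$ to an atom $\Delta'$ of $\cl{\SFC}$ by a Lindenbaum-style pass over the finite enumeration $\{\psi_0,\dots,\psi_h\}$, deciding each $\psi_k$ while preserving consistency; since $\neg\phi\in T$ and $\phi\in\cl{\SFC}$, the extension must place $\neg\phi$ (not $\phi$) into $\Delta'$, so $\neg\phi\in\Delta'$. It remains to check $\Delta'|_{\K_i}=\Delta|_{\K_i}$. The inclusion $\Delta|_{\K_i}\subseteq\Delta'|_{\K_i}$ is immediate because $\Delta|_{\K_i}\subseteq T\subseteq\Delta'$. For the converse, suppose $\K_i\psi\in\Delta'$ but $\K_i\psi\notin\Delta$; then $\K_i\psi\in\cl{\SFC}$ forces $\neg\K_i\psi\in\Delta$, hence $\neg\K_i\psi\in\Delta|_{\neg\K_i}\subseteq T\subseteq\Delta'$, so $\Delta'$ would contain both $\K_i\psi$ and $\neg\K_i\psi$, contradicting its consistency. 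Therefore $\Delta'|_{\K_i}=\Delta|_{\K_i}$ and $\neg\phi\in\Delta'$, as required. The only genuinely delicate point is the consistency step, and I expect the care there to lie in correctly invoking $\AxTransK$ and $\AxEucK$ to pull $\K_i$ across the whole kernel conjunction; the remaining steps are routine bookkeeping over the finite closure.
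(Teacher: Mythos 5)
Your proposal is correct and takes essentially the same route as the paper, which omits the proof of this proposition and defers to its single-agent version in Fervari et al.\ (2017): that argument is precisely the standard \textsf{S5} existence lemma you give, namely showing $\Delta|_{\K_i}\cup\Delta|_{\neg\K_i}\cup\{\neg\phi\}$ consistent via \NECK, \DISTK, \AxTransK\ and \AxEucK\ (to derive $\theta\to\K_i\theta$ for the kernel conjunction $\theta$), and then extending to an atom over the finite closure $\cl{\SFC}$. Your bookkeeping is also sound: $\phi\in\SFC\subseteq\cl{\SFC}$ by the definition of the closure, and including the negative kernel $\Delta|_{\neg\K_i}$ is exactly what forces $\Delta'|_{\K_i}\subseteq\Delta|_{\K_i}$ in the final verification.
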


\begin{proposition}\label{pro.AtomShareTheSameKh}
	Let $\Delta$ and $\Delta'$ be two atoms of $\cl{\SFC}$ such that $\Delta|_{\K_i}=\Delta'|_{\K_i}$. We have $\Delta|_{\Kh_i}=\Delta'|_{\Kh_i}$.% for all $\Delta'\in[\Delta]$.
\end{proposition}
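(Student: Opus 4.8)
The plan is to exploit the positive-introspection axiom $\AxKhtoKKh$ together with the truth axiom $\AxTrK$ to transfer $\Kh_i$-membership between two atoms that already agree on their $\K_i$-formulas. Since the hypothesis $\Delta|_{\K_i}=\Delta'|_{\K_i}$ is symmetric in $\Delta$ and $\Delta'$, it suffices to establish one inclusion, say $\Delta|_{\Kh_i}\subseteq\Delta'|_{\Kh_i}$; the reverse inclusion then follows by swapping the roles of the two atoms, giving the desired equality.

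So I would fix an arbitrary $\Kh_i\phi\in\Delta$; being a member of the atom, this formula lies in $\cl{\SFC}$. The first thing to check is the bookkeeping about the closure. By Definition~\ref{def.closure}, the operation $\cl{\cdot}$ only adds formulas of the shape $\K_i\psi$ and $\neg\K_i\psi$ to $\SFC$, so any $\Kh_i$-formula occurring in $\cl{\SFC}$ must already belong to $\SFC$. Hence $\Kh_i\phi\in\SFC$, and since agent $i$ clearly occurs in $\SFC$, the closure contains $\K_i\Kh_i\phi$, i.e.\ $\K_i\Kh_i\phi\in\cl{\SFC}$. This is the one point that genuinely needs verifying: it guarantees that both atoms $\Delta$ and $\Delta'$ actually \emph{decide} the intermediate formula $\K_i\Kh_i\phi$, which is what makes the transfer argument go through.

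With this in hand, the remainder is a short derivation inside $\SLKhc$. From $\Kh_i\phi\in\Delta$ and the instance $\vdash\Kh_i\phi\to\K_i\Kh_i\phi$ of $\AxKhtoKKh$, the consistency and $\cl{\SFC}$-completeness of $\Delta$ force $\K_i\Kh_i\phi\in\Delta$ (were $\neg\K_i\Kh_i\phi\in\Delta$ instead, it would contradict consistency together with $\Kh_i\phi$ and the axiom). Thus $\K_i\Kh_i\phi\in\Delta|_{\K_i}$, and the hypothesis $\Delta|_{\K_i}=\Delta'|_{\K_i}$ yields $\K_i\Kh_i\phi\in\Delta'$. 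Finally, applying the truth axiom $\AxTrK$ in the instance $\vdash\K_i\Kh_i\phi\to\Kh_i\phi$, the consistency and completeness of $\Delta'$ give $\Kh_i\phi\in\Delta'$, as required.

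I expect no serious obstacle in this argument: it is the canonical transfer of a positively introspective modality across an epistemic equivalence class, and once the closure computation is settled everything reduces to routine propositional reasoning over maximal consistent subsets of $\cl{\SFC}$. The only delicate step is precisely the verification that $\K_i\Kh_i\phi\in\cl{\SFC}$, so that the atoms determine the truth of this auxiliary formula; this is where the shape of the closure definition does the essential work.
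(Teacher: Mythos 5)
Your proof is correct and is essentially the intended argument: the paper omits its own proof (deferring to the single-agent version in Fervari et al.), which runs exactly this way, using $\AxKhtoKKh$ to push $\Kh_i\phi$ into $\Delta|_{\K_i}$, the hypothesis to transfer it to $\Delta'$, and $\AxTrK$ to recover $\Kh_i\phi$, with symmetry giving equality. Your explicit check that $\K_i\Kh_i\phi\in\cl{\SFC}$ (because $\Kh_i$-formulas in $\cl{\SFC}$ must already lie in $\SFC$) is precisely the closure bookkeeping that makes the atoms decide the intermediate formula, so no gap remains.
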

% \begin{proof}%%see appendix
% 	%It is obvious by the Axioms \AxKhtoKKh\ and \AxTrK\ and the definition of $\sim_c$.
% 	For each $\Kh_i\phi\in\Delta$, by Definition \ref{def.closure}, we have that $\Kh_i\phi\in\SFC$. It follows that $\K_i\Kh_i\phi\in\cl{\SFC}$.
% 	For each $\Kh_i\phi\in\Delta$, by the Axiom \AxKhtoKKh , we have $\K_i\Kh_i\phi\in\Delta$. Since $\Delta|_{\K_i}=\Delta'|_{\K_i}$, it follows that $\K_i\Kh_i\phi\in\Delta'$. By Axiom \AxTrK, we then have that $\Kh_i\phi\in\Delta'$. Thus, we have showed that $\Kh_i\phi\in\Delta$ implies $\Kh_i\phi\in\Delta'$. Similarly we can prove that $\Kh_i\phi\in\Delta'$ implies $\Kh_i\phi\in\Delta$. Hence, $\Delta|_{\Kh_i}=\Delta'|_{\Kh_i}$.% for all $\Delta'\in[\Delta]$.
% \end{proof}

% 	Let $\Phi$ be a subformula-closed set of formulas.
% 	%As above, let 
% 	%$\SFC^\BA=\{(\phi,1),(\phi,-1)\mid\Khcx\phi\in\SFC \}$, 
% 	Let $\SFC^\BA=\{\phi\in\Lanc\mid \Khc\phi\in\SFC \}$, 
% 	and let $g:\BA\to\SFC^\BA$ be a surjective function.
\begin{definition}[Canonical model]\label{def.canonicalKHC}	
	Given a subformula-closed set $\SFC$, the canonical model $\M^{\SFC}=\lr{W,\{\sim_i\mid i\in\Ag\},\{\Act_i\mid i\in\Ag\},\{Q{(a)}\mid a\in\bigcup_{i\in\Ag} \Act_i \},V}$ is defined as:	
	\begin{itemize}
		\item $W=\{\Delta\mid\Delta$ is an atom of $\cl{\SFC}\}$;
		\item for each $i\in\Ag$, $	\Delta\sim_i \Gamma\iff\Delta|_{\K_i}=\Gamma|_{\K_i}$;
		\item for each $i\in\Ag$, $\Act_i=\{(i,\phi)\mid \Kh_i\phi\in\SFC\}$;
		\item for each $(i,\phi)\in\bigcup\Act_i$, we have that $	(\Delta,\Gamma)\in Q(\phi)\iff \Kh_i\phi\in \Delta$ and $\K_i\phi\in\Gamma$;
		
		\item for each $p\in\SFC$, $p\in V(\Delta)\iff p\in\Delta$.
	\end{itemize}
\end{definition}

\begin{proposition}\label{pro.canonicalModelIsPerfectRecall}
	If $\Gamma\in Q(i,\phi)(\Delta)$ and $\Gamma'\in [\Gamma]^i$, then $\Gamma'\in Q(i,\phi)([\Delta]^i)$.
	In other words, the model $\M^\SFC$ has perfect recall.
\end{proposition}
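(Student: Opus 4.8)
The plan is to unfold the canonical-model definitions of $Q(i,\phi)$ and $\sim_i$ and to exploit the fact that membership in $Q(i,\phi)$ decouples into a condition on the source atom and a condition on the target atom: by Definition~\ref{def.canonicalKHC}, $(\Delta,\Gamma)\in Q(i,\phi)$ holds exactly when $\Kh_i\phi\in\Delta$ and $\K_i\phi\in\Gamma$. Because of this decoupling, I expect to be able to witness the required predecessor by $\Delta$ itself, rather than searching for a genuinely different state; this makes the canonical model satisfy something slightly stronger than the bare perfect-recall condition.

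First I would record the bookkeeping fact that $\K_i\phi\in\cl{\SFC}$. Since the action $(i,\phi)$ lies in $\Act_i$ only when $\Kh_i\phi\in\SFC$, and $\SFC$ is subformula-closed, we get $\phi\in\SFC$, and hence $\K_i\phi\in\cl{\SFC}$ by Definition~\ref{def.closure}. This guarantees that $\K_i\phi$ is one of the formulas every atom decides, so the assertion ``$\K_i\phi\in\Gamma$'' is meaningful and, more importantly, stable under $\sim_i$.

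Next, from the hypothesis $\Gamma\in Q(i,\phi)(\Delta)$, i.e.\ $(\Delta,\Gamma)\in Q(i,\phi)$, the definition of the canonical transition relation gives both $\Kh_i\phi\in\Delta$ and $\K_i\phi\in\Gamma$. From the second hypothesis $\Gamma'\in[\Gamma]^i$, i.e.\ $\Gamma'\sim_i\Gamma$, the definition of $\sim_i$ yields $\Gamma'|_{\K_i}=\Gamma|_{\K_i}$; since $\K_i\phi\in\Gamma$ (and $\K_i\phi\in\cl{\SFC}$), it follows that $\K_i\phi\in\Gamma'$. Combining $\Kh_i\phi\in\Delta$ with $\K_i\phi\in\Gamma'$ and applying the definition of $Q(i,\phi)$ once more gives $(\Delta,\Gamma')\in Q(i,\phi)$. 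As $\Delta\in[\Delta]^i$, this yields $\Gamma'\in Q(i,\phi)(\Delta)\subseteq Q(i,\phi)([\Delta]^i)$, which is exactly the claim. Reading this against the perfect-recall diagram with $s_1=\Delta$, $s_2=\Gamma$, $s_4=\Gamma'$ and witness $s_3=\Delta$ then certifies that $\M^\SFC$ has perfect recall.

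There is essentially no hard part here: the proposition is almost immediate once the definitions are in place. The only point requiring a moment's care is the first step, namely verifying that $\K_i\phi$ genuinely belongs to $\cl{\SFC}$, so that the transfer of $\K_i\phi$ from $\Gamma$ to $\Gamma'$ along $\sim_i$ is licensed by the definition of the epistemic relation on atoms. Everything else is a direct substitution into the two defining biconditionals, and the availability of $\Delta$ itself as the witness is what keeps the argument trivial.
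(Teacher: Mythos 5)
Your proof is correct and follows essentially the same route as the paper's: both witness the required predecessor by $\Delta$ itself, observing that $(\Delta,\Gamma)\in Q(i,\phi)$ decouples into $\Kh_i\phi\in\Delta$ and $\K_i\phi\in\Gamma$, and that $\K_i\phi$ transfers from $\Gamma$ to $\Gamma'$ via $\Gamma|_{\K_i}=\Gamma'|_{\K_i}$. Your preliminary check that $\K_i\phi\in\cl{\SFC}$ is harmless extra bookkeeping (it already follows from $\K_i\phi\in\Gamma$, since atoms only contain formulas of $\cl{\SFC}$ or their negations), which the paper simply omits.
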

\begin{proof}
	We only need to show that $\Gamma'\in Q(i,\phi)(\Delta)$. Since $\Gamma\in Q(i,\phi)(\Delta)$, it follows that $\Kh_i\phi\in\Delta$ and $\K_i\phi\in\Gamma$. Since $\Gamma'\in [\Gamma]^i$, it follows that $\K_i\phi\in\Gamma'$. Thus, we have that $\Gamma'\in Q(i,\phi)(\Delta)$.
\end{proof}

\begin{proposition}\label{prop.AtomModelAllSuccessorHaskh}
	Let $\Delta$ be a state in $\M^{\SFC}$ and $(i,\psi)\in\ACT_i$ be executable at $\Delta$. If $\Kh_i\phi\in\Delta'$ for all $\Delta'\in Q(i,\psi)(\Delta)$ then $\Kh_i\phi\in\Delta$.
\end{proposition}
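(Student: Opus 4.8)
The plan is to reduce the membership claim $\Kh_i\phi\in\Delta$ to a purely proof-theoretic fact about $\SLKhc$, exploiting that the action $(i,\psi)$ in $\M^\SFC$ is, by construction, the canonical witness for $\Kh_i\psi$. First I would unpack the hypotheses. Since $(i,\psi)$ is executable at $\Delta$, the set $Q(i,\psi)(\Delta)$ is non-empty; by the definition of $Q$ in the canonical model this forces $\Kh_i\psi\in\Delta$, and it simultaneously identifies $Q(i,\psi)(\Delta)$ with exactly the set of atoms $\Gamma$ such that $\K_i\psi\in\Gamma$. Hence the assumption ``$\Kh_i\phi\in\Delta'$ for all $\Delta'\in Q(i,\psi)(\Delta)$'' rewrites to the statement: every atom $\Gamma$ with $\K_i\psi\in\Gamma$ also satisfies $\Kh_i\phi\in\Gamma$.

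Next I would lift this to a derivability statement. Observe that $\K_i\psi\in\cl{\SFC}$ (because $(i,\psi)\in\Act_i$ gives $\Kh_i\psi\in\SFC$, hence $\psi\in\SFC$ by subformula-closure and then $\K_i\psi\in\cl{\SFC}$ by Definition~\ref{def.closure}), and $\Kh_i\phi\in\cl{\SFC}$ (since $Q(i,\psi)(\Delta)$ is non-empty and each of its members, being an atom, contains only formulas drawn from $\cl{\SFC}$). So both formulas are decided by every atom. If $\not\vdash\K_i\psi\to\Kh_i\phi$, then $\K_i\psi\land\neg\Kh_i\phi$ is consistent and, by the standard extension of a consistent set to an atom of $\cl{\SFC}$, it lies in some atom $\Gamma$ with $\K_i\psi\in\Gamma$ and $\Kh_i\phi\notin\Gamma$, contradicting the previous paragraph. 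Therefore $\vdash\K_i\psi\to\Kh_i\phi$.

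The heart of the argument is then a short derivation carried out entirely at the level of theorems of $\SLKhc$: applying \EQREPKh\ to $\K_i\psi\to\Kh_i\phi$ gives $\vdash\Kh_i\K_i\psi\to\Kh_i\Kh_i\phi$; chaining this with \AxKhtoKhK\ (instantiated as $\Kh_i\psi\to\Kh_i\K_i\psi$) and \AxKhKh\ (instantiated as $\Kh_i\Kh_i\phi\to\Kh_i\phi$) yields $\vdash\Kh_i\psi\to\Kh_i\phi$. Finally, since $\Kh_i\psi\in\Delta$ and $\Delta$ is consistent, this theorem forces $\Kh_i\phi\in\Delta$: as $\Kh_i\phi\in\cl{\SFC}$, the atom $\Delta$ must contain it rather than its negation, for otherwise $\{\Kh_i\psi,\neg\Kh_i\phi\}\subseteq\Delta$ would be inconsistent.

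The main obstacle is conceptual rather than computational: one must resist trying to place the intermediate formulas $\Kh_i\K_i\psi$ and $\Kh_i\Kh_i\phi$ inside $\Delta$ or inside $\cl{\SFC}$, because the closure only adds single $\K_i/\neg\K_i$ prefixes to subformulas of $\SFC$. The composition must therefore be phrased as a derivation among theorems and invoked only through its conclusion $\Kh_i\psi\to\Kh_i\phi$, whose antecedent and consequent genuinely belong to $\cl{\SFC}$. The remaining care-points are routine: verifying the executability-to-$\Kh_i\psi\in\Delta$ step and the rewriting of $Q(i,\psi)(\Delta)$ as the collection of all atoms containing $\K_i\psi$.
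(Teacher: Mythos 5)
Your proposal is correct and follows essentially the same route as the paper's own proof: both first extract $\Kh_i\psi\in\Delta$ from executability, use a Lindenbaum-style atom argument to establish $\vdash\K_i\psi\to\Kh_i\phi$, then derive $\vdash\Kh_i\psi\to\Kh_i\phi$ via \EQREPKh, \AxKhtoKhK, and \AxKhKh, and finally conclude $\Kh_i\phi\in\Delta$ by consistency of the atom. Your additional remarks (that $\K_i\psi$ and $\Kh_i\phi$ lie in $\cl{\SFC}$, and that the composition step must be carried out at the level of theorems since $\Kh_i\K_i\psi$ and $\Kh_i\Kh_i\phi$ need not be in the closure) only make explicit details the paper leaves implicit.
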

\begin{proof}
% 	We present the detailed proof in Appendix \ref{proof.prop.AtomModelAllSuccessorHaskh}
	First, we show that $\K_i\psi$ is not consistent with $\neg\Kh_i \phi$. %Assume that $\K\psi$ is consistent with $\neg\Kh \phi$.
	Since $(i,\psi)\in\ACT_i$ is executable at $\Delta$, it follows that $\Kh_i\psi\in\Delta$ and that there is some state in $Q(i,\psi)(\Delta)$ that contains $\K_i\psi$.
	Moreover, it is obvious that $\Kh_i\phi\in\cl{\SFC}$.  
	Assume that $\K_i\psi$ is consistent with $\neg\Kh_i \phi$. 
	By a Lindenbaum argument, there exists an atom $\Gamma$ of $\cl{\SFC}$ such that $\{\K_i\psi,\neg\Kh_i\phi \}\subseteq \Gamma$. 
	By the definition of $\M^\SFC$, it follows that $(\Delta,\Gamma)\in Q(i,\psi)$.
	Since we know that $\Kh_i\phi\in\Delta'$ for all $\Delta'\in Q(i,\psi)(\Delta)$, it follows that $\Kh_i\phi\in\Gamma$.
	It is contradictory with the fact that $\Gamma$ is consistent. 
	Thus, $\K_i\psi$ is not consistent with $\neg\Kh_i \phi$. Hence, we have that $\vdash \K_i\psi\to \Kh_i\phi$. 
	
	Since $\vdash \K_i\psi\to \Kh_i\phi$, it follows by Rule \EQREPKh\ and Axiom \AxKhtoKhK\ that $\vdash\Kh_i\psi\to \Kh_i\Kh_i\phi$. Moreover, it follows by Axiom \AxKhKh\ that $\vdash\Kh_i\psi\to \Kh_i\phi$. 
	Since we have shown that  $\Kh_i{\psi}\in \Delta$, we have that $\Kh_i\phi\in\Delta$.% by the definition 
\end{proof}
%Now we are ready to obtain the truth lemma. 
\begin{lemma}[Truth Lemma]\label{lem.truth}
	For each $\phi\in\cl{\SFC}$,  $\M^{\SFC},\Delta\vDash\phi$ iff $\phi\in\Delta$.
\end{lemma}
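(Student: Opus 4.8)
The plan is to prove the Truth Lemma by induction on the structure of $\phi\in\cl{\SFC}$. The Boolean cases ($\top$, $p$, $\neg\phi$, $\phi\land\psi$) are routine and follow from the definition of atoms and the valuation $V$ in $\M^\SFC$. The case for $\K_i\phi$ is the standard modal argument: the left-to-right direction uses that $\sim_i$ is defined via $\Delta|_{\K_i}=\Gamma|_{\K_i}$ together with the axioms $\AxTrK,\AxTransK,\AxEucK$ guaranteeing $\sim_i$ is an equivalence relation, while the right-to-left direction (if $\K_i\phi\notin\Delta$, find a $\sim_i$-accessible atom falsifying $\phi$) is exactly Proposition \ref{pro.exeistenceLemmaForK}. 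So the real content is the $\Kh_i\phi$ case, which I expect to be the main obstacle.

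For the $\Kh_i\phi$ case, I would argue both directions separately. For the easier direction, suppose $\Kh_i\phi\in\Delta$. I want to exhibit a strongly executable plan witnessing $\M^\SFC,\Delta\vDash\Kh_i\phi$. The natural candidate is the single atomic action $(i,\phi)\in\Act_i$ of the canonical model, which by Definition \ref{def.canonicalKHC} is executable precisely when $\Kh_i\phi$ is present, and whose $(i,\phi)$-successors are exactly the atoms containing $\K_i\phi$. I would first check strong executability of $(i,\phi)$ on $[\Delta]^i$: using Proposition \ref{pro.AtomShareTheSameKh} every atom in $[\Delta]^i$ also contains $\Kh_i\phi$, and the definition of $Q(\phi)$ together with $\AxKhtoKhK$ (to produce a successor containing $\K_i\phi$) ensures a successor exists. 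Then every state in $Q((i,\phi))([\Delta]^i)$ contains $\K_i\phi$, hence contains $\phi$ (since $\AxTrK$ gives $\vdash\K_i\phi\to\phi$ and atoms are closed under provable consequence within $\cl\SFC$), so by the induction hypothesis $\phi$ holds there, giving the semantic clause for $\Kh_i$.

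For the harder converse, suppose $\M^\SFC,\Delta\vDash\Kh_i\phi$; I must show $\Kh_i\phi\in\Delta$. By the semantics there is a plan $\pi\in\Prg(i)$ strongly executable on $[\Delta]^i$ with $\M^\SFC,t\vDash\phi$ for every $t\in Q(\pi)([\Delta]^i)$. The strategy is to convert $\pi$ into its execution tree via Proposition \ref{pro.fromProgramToExecutionTree}, whose root is labelled $[\Delta]^i$ and whose leaves lie inside $Q(\pi)([\Delta]^i)$, so by the induction hypothesis $\K_i\phi$ holds at every leaf label (using Proposition \ref{pro.terminationClosedOverSimI} to pass from $\phi$ to $\K_i\phi$ on the $\sim_i$-closed leaf sets). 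Then I would push $\Kh_i\phi$ back up the tree from the leaves to the root by a secondary induction on tree height, where the inductive step at each internal node is exactly Proposition \ref{prop.AtomModelAllSuccessorHaskh}: if every $(i,\psi)$-successor of a node carries $\Kh_i\phi$ then so does the node itself. The base case at the leaves needs $\K_i\phi\to\Kh_i\phi$, which is Axiom $\AxKtoKh$. Propagating to the root yields $\Kh_i\phi\in\Delta$.

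The main difficulty will be making the backward-propagation argument precise: the execution tree nodes are labelled by $i$-belief-states (equivalence classes) whereas the atoms live at the level of individual states, so I must be careful that Proposition \ref{prop.AtomModelAllSuccessorHaskh} applies uniformly to each atom in a node's label, and that the canonical-model actions $(i,\psi)$ matching the tree edges genuinely satisfy the executability hypothesis needed. Reconciling the plan-level reasoning (which Proposition \ref{pro.fromProgramToExecutionTree} phrases in terms of arbitrary $\Prg(i)$ actions) with the specific canonical actions of the form $(i,\psi)$ keyed to formulas is the delicate bookkeeping step; once that correspondence is fixed, the inductive climb using $\AxKtoKh$ at the leaves and Proposition \ref{prop.AtomModelAllSuccessorHaskh} at internal nodes closes the argument.
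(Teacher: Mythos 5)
Your proposal follows the paper's proof essentially step for step: the same witness plan, the single canonical action $(i,\phi)$, for the right-to-left direction (concluding via $\AxTrK$ and the induction hypothesis), and the same left-to-right argument via Proposition~\ref{pro.fromProgramToExecutionTree}, closure of $Q(\pi)([\Delta]^i)$ under $\sim_i$ from perfect recall of $\M^\SFC$, Axiom \AxKtoKh\ at the leaves, and Proposition~\ref{prop.AtomModelAllSuccessorHaskh} propagating $\Kh_i\phi$ up to the root by induction on node height. The one under-justified spot is successor existence in the easy direction: \AxKhtoKhK\ alone does not produce an atom containing $\K_i\phi$ --- you must first show $\K_i\phi$ is consistent (using \EQREPKh, \AxKhbot\ and \AxKhtoKhK\ together) and then invoke a Lindenbaum argument, exactly as the paper does.
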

\begin{proof}
	We prove it by induction on $\phi$. 
	The atomic and Boolean cases are straightforward. The case of $\K_i\phi$ can be proved by  Proposition~\ref{pro.exeistenceLemmaForK}.
		Next, we only focus on the case of $\Kh_i\phi$.

	\textbf{Right to Left}: If $\Kh_i\phi\in\Delta$, we will show $\M^{\SFC},\Delta\vDash\Kh_i\phi$.  
	% Firstly, there are two cases: $\K\phi\in \Delta$ or $\K\phi\not\in\Delta$. If $\K\phi\in\Delta$, then $\K\phi,\phi\in\Delta'$ for all $\Delta'\in[\Delta]$. Since $\phi\in\SFC$, it follows by IH that $\M^{\SFC},\Delta'\vDash\phi$ for all $\Delta'\in[\Delta]$. Therefore, $\M^{\SFC},\Delta\vDash\K\phi$. It follows by Axiom \AxKtoKh\ and the soundness of $\SKh$ that $\M^{\SFC},\Delta\vDash\Kh\phi$. 
	We first show that $\K_i\phi$ is consistent. If not, namely $\vdash\K_i\phi\to\bot$, it follows by Rule \EQREPKh\ that $\vdash \Kh_i\K_i\phi\to\Kh_i\bot$. It follows by Axiom \AxKhbot\ that $\vdash\Kh_i\K_i\phi\to\bot$. Since $\Kh_i\phi\in\Delta$, it follows by Axiom \AxKhtoKhK\ that $\Delta\vdash\bot$, which is in contradiction with the fact that $\Delta$ is consistent. Therefore, $\K_i\phi$ is consistent. 
		
		By Lindenbaum's Lemma, %Proposition \ref{pro.lindenbaumForAtom}
		there exists an atom $\Gamma$ such that $\K_i\phi\in\Gamma$. 
		By the definition of $\M^\SFC$, it follows that $(i,\phi)\in\Act_i$ and that $(\Delta',\Gamma)\in Q(i,\phi)$ for each $\Delta'\in [\Delta]^i$. It means $(i,\phi)$ is strongly executable on $[\Delta]^i$. 
		For each $\Theta\in Q(i,\phi)([\Delta]^i)$, by the definition of $\M^\SFC$, it follows that $\K_i\phi\in\Theta$. By Axiom \AxTr, it follows that $\phi\in\Theta$. By IH, we then have that $\M^\SFC,\Theta\vDash\phi$ for each $\Theta\in Q(i,\phi)([\Delta]^i)$. Thus, we have that $\M^\SFC,\Delta\vDash\Kh_i\phi$.
		
		\textbf{Left to Right}: Suppose $\M^{\SFC},\Delta\vDash\Kh_i\phi$, we will show $\Kh_i\phi\in\Delta$.
		Since $\M^{\SFC},\Delta\vDash\Kh_i\phi$, it follows that there exists $\pi\in\Prg(i)$ such that $\pi$ is strongly executable on $[\Delta]^i$ and that $\M^\SFC,\Gamma\vDash\phi$ for all $\Gamma\in Q(\pi)([\Delta]^i)$.
		By IH, we have that $\phi\in\Gamma$ for all $\Gamma\in Q(\pi)([\Delta]^i)$.
		Moreover, for each $\Gamma\in Q(\pi)([\Delta]^i)$, if $\Gamma'\in [\Gamma]^i$, by Proposition \ref{pro.canonicalModelIsPerfectRecall}, it follows that $\Gamma'\in Q(\pi)([\Delta]^i)$, and we then have that $\phi\in\Gamma'$.
		Moreover, with Proposition \ref{pro.exeistenceLemmaForK}, it is easy to check that $\K_i\phi\in\Gamma$ for all $\Gamma\in Q(\pi)([\Delta]^i)$.

		Since $\pi$ is strongly executable on $[\Delta]^i$, by Proposition \ref{pro.fromProgramToExecutionTree}, there is a finite execution tree $\T$ of $\M^\SFC$ for $i$ such that $L(r)=[\Delta]^i$ and that $L(k)\subseteq Q(\pi)([\Delta]^i)$. Since we have shown that $\K_i\phi\in\Gamma$ for all $\Gamma\in Q(\pi)([\Delta]^i)$, it follows that if $k$ is a leaf node then $\K_i\phi$ is in all states in $L(k)$.
		% Next, we will show that $\Kh_i\phi\in\Delta$. 
		Next, we firstly show the following claim:
		\[\text{for each node }k\text{ and each }\Theta\in L(k): \Kh_i\phi\in\Theta.\]
		Please note that the execution tree $\T$ is finite.
		We prove the claim above by induction on the height of nodes.
		If the height of $k$ is $0$, it means that $k$ is a leaf node. It follows that $\K_i\phi\in\Theta$ for each $\Theta\in L(k)$. By Axiom \AxKtoKh, it follows that $\Kh_i\phi\in\Theta$ for each $\Theta\in L(k)$.
		With the IH that the claim holds for each node whose height is less than $h$, we will show that the claim holds for nodes whose height is $h\geq 1$. Given a node $k$ whose height is $h$, since $h\geq 1$, it follows that there is a node $k'$ such that $(k,k')$ is an edge in $\T$ and the height of $k'$ is less than $h$, in other words, $k'$ is a child node of $k$. Let $L(k)=[\Theta]^i$ and $L(k,k')=(i,\psi)$. Since $\T$ is an execution tree, it follows that $(i,\psi)$ is strongly executable on $[\Theta]^i$, and then $(i,\psi)$ is executable on $\Theta$. For each $\Theta'\in Q(i,\psi)(\Theta)\subseteq Q(L(k,k'))(L(k))$, by the definition of execution trees, it follows that there is a $k$-child $k''$ such that $\Theta'\in L(k'')$. Since $k''$ is a child of $k$, by IH, it follows that $\Kh_i\phi\in \Theta'$. Thus, we have shown that $\Kh_i\phi\in \Theta'$ for each $\Theta'\in Q(i,\psi)(\Theta)$. By Proposition \ref{prop.AtomModelAllSuccessorHaskh}, it follows that $\Kh_i\phi\in\Theta$.
		Thus, we have shown the claim.
		Since the root is labeled with $[\Delta]^i$, we then have that $\Kh_i\phi\in\Delta$.
\end{proof}

Let $\phi$ be a consistent formula.
By Proposition \ref{pro.exeistenceLemmaForK}, it follows that there is an atom $\Delta$ of $\cl{\SFC} $ such that $\phi\in\Delta$, where $\SFC$ is the set of subformulas of $\phi$. By Proposition \ref{pro.canonicalModelIsPerfectRecall} and Lemma \ref{lem.truth}, it follows that $\phi$ is satisfiable over finite models with perfect recall. The completeness then follows:
\begin{theorem}[Completeness]
	The proof system $\SLKhc$ is complete over finite models with perfect recall.	
\end{theorem}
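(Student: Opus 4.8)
The plan is to establish completeness in the standard way, via its contrapositive: it suffices to show that every $\SLKhc$-consistent $\ELKh$-formula is satisfiable on some finite model with perfect recall. So I would fix an arbitrary consistent $\phi$ and aim to satisfy it in the canonical model $\M^{\SFC}$ constructed above, taking $\SFC$ to be the (finite) set of subformulas of $\phi$.

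First I would assemble this finite canonical model. Since $\SFC$ is finite, so is its closure $\cl{\SFC}$ by the remark following Definition \ref{def.closure}, and hence there are only finitely many atoms; thus $\M^{\SFC}$ is a finite model. Next I would seat $\phi$ inside a state: by a Lindenbaum-type extension (cf.\ the existence argument behind Proposition \ref{pro.exeistenceLemmaForK}), the consistent singleton $\{\phi\}$ extends to an atom $\Delta$ of $\cl{\SFC}$ with $\phi\in\Delta$.

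I would then verify that $\M^{\SFC}$ actually lies in the target class of models. Finiteness is already in hand, and perfect recall is exactly Proposition \ref{pro.canonicalModelIsPerfectRecall}, which shows the action relations $Q(i,\psi)$ respect the $\sim_i$-classes of their targets. With the model in the right class, the Truth Lemma (Lemma \ref{lem.truth}) applies verbatim to give $\M^{\SFC},\Delta\vDash\phi$, so $\phi$ is satisfiable over finite models with perfect recall. Completeness is then immediate: any non-theorem $\phi$ yields a consistent $\neg\phi$, which by the above is satisfiable, whence $\phi$ fails to be valid on this class.

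The genuine work — and the step I expect to be the main obstacle — is not this final packaging but the $\Kh_i$ case of the Truth Lemma on which everything rests, in particular its left-to-right direction, where a semantic witness plan $\pi\in\Prg(i)$ for $\Kh_i\phi$ at $\Delta$ must be converted into the syntactic membership $\Kh_i\phi\in\Delta$. The delicate point is that satisfaction of $\Kh_i\phi$ only guarantees $\phi$, and hence $\K_i\phi$, in the \emph{terminal} belief states of an execution tree, so one must propagate $\Kh_i\phi$ \emph{upward} from the leaves to the root. This is precisely where the execution-tree correspondence (Propositions \ref{pro.fromProgramToExecutionTree} and \ref{pro.fromExecutionTreeToProgram}) and the inductive backward step of Proposition \ref{prop.AtomModelAllSuccessorHaskh} — itself powered by the composition axiom \AxKhKh\ together with \AxKhtoKhK\ and \AxKtoKh — do the heavy lifting. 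Once that upward propagation is secured, the completeness theorem follows by the routine assembly described above.
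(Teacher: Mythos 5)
Your proposal is correct and follows essentially the same route as the paper: fix a consistent $\phi$, take $\SFC$ to be its subformulas, extend $\phi$ to an atom $\Delta$ of $\cl{\SFC}$, note that the canonical model $\M^{\SFC}$ is finite and has perfect recall (Proposition \ref{pro.canonicalModelIsPerfectRecall}), and apply the Truth Lemma (Lemma \ref{lem.truth}) to get satisfiability and hence completeness. You also correctly identify that the substantive work lies in the left-to-right $\Kh_i$ case of the Truth Lemma, where the execution-tree correspondence (Propositions \ref{pro.fromProgramToExecutionTree} and \ref{pro.fromExecutionTreeToProgram}) and the upward propagation via Proposition \ref{prop.AtomModelAllSuccessorHaskh} are used, exactly as in the paper.
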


From the construction of the canonical model $\M^\SFC$ in Definition \ref{def.canonicalKHC}, we can see that both the state set $W$ and the action set $A$ are bounded. This implies that the logic has a small model property. Moreover,  by Theorem \ref{theo.complexityOfMC},
the decidability then follows: 

\begin{theorem}
	The logic \ELKh\ is decidable.
\end{theorem}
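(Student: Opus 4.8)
The plan is to derive decidability from the small model property established by the canonical model construction, combined with the decidable (indeed \Ptime) model checking of Theorem \ref{theo.complexityOfMC}. Concretely, I would decide the validity problem for \ELKh\ by reducing it to a finite search over small finite models with perfect recall.

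First I would fix the target formula $\phi$ and take $\SFC$ to be its (finite) set of subformulas, so that $\cl{\SFC}$ is finite by the remark following Definition \ref{def.closure}. The key quantitative observation is that every state of the canonical model $\M^\SFC$ is an atom, i.e.\ a consistent choice of $\psi$ or $\neg\psi$ for each $\psi\in\cl{\SFC}$; hence $|W|\le 2^{|\cl{\SFC}|}$, and by Definition \ref{def.canonicalKHC} the action set $\bigA$ is indexed exactly by the pairs $(i,\psi)$ with $\Kh_i\psi\in\SFC$, so it too is bounded by $|\SFC|$, while only the propositional letters occurring in $\phi$ are relevant to the valuation. Combining this with the completeness theorem (which places a satisfying point of any consistent formula inside $\M^\SFC$) and with the contrapositive of soundness (satisfiable implies consistent) yields a small model property with an \emph{effectively computable} bound $N=N(|\phi|)$: if $\neg\phi$ is satisfiable over finite models with perfect recall at all, then it is satisfiable in such a model whose domain, action set, and propositional vocabulary are all bounded by $N$.

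With the bound in hand, the decision procedure is immediate. To test whether $\phi$ is valid, I would enumerate, up to isomorphism, every epistemic transition system $\M$ with perfect recall of size at most $N$ over the finite vocabulary fixed by $\phi$, together with each state $s$ of $\M$, and run the procedure \texttt{MC} of Algorithm \ref{algo.modelChecking} to check whether $\M,s\vDash\neg\phi$. By Proposition \ref{pro.canonicalModelIsPerfectRecall} the witnessing model may indeed be taken to have perfect recall, so restricting the enumeration to such models loses nothing. If some pair $(\M,s)$ satisfies $\neg\phi$ then $\phi$ is not valid; if none does, then by the small model property $\neg\phi$ is unsatisfiable and $\phi$ is valid. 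Since there are only finitely many models to inspect and each model-checking call terminates (in \Ptime\ by Theorem \ref{theo.complexityOfMC}), the whole procedure halts and correctly decides validity; decidability of satisfiability and of provability then follow by soundness and completeness.

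I expect the only genuinely delicate point to be confirming that the bound $N$ is effectively computable and that the finite enumeration truly suffices, in particular that the witnessing model may be taken with a bounded action set over finitely many propositions. Both are read off directly from Definition \ref{def.canonicalKHC}: the actions of $\M^\SFC$ are precisely the $\Kh_i$-subformulas of $\phi$ and the valuation is determined on the propositional letters of $\phi$, so the search space, though exponential in $|\phi|$, is genuinely finite. Everything else is a routine assembly of the completeness theorem and the model-checking algorithm already established.
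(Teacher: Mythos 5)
Your proposal is correct and follows essentially the same route as the paper: the paper likewise derives decidability from the small model property given by the bounded canonical model $\M^\SFC$ of Definition \ref{def.canonicalKHC} together with the \Ptime\ model checking of Theorem \ref{theo.complexityOfMC}. You have merely spelled out the enumeration-over-small-models procedure that the paper leaves implicit, including the (correct) observations that the witnessing model may be taken with perfect recall and a bounded action set.
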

%\noteYW{Decidability?}

\section{Conclusion} \label{sec.conc}
In this paper, we propose the notion of higher-order epistemic planning by using an epistemic logic of knowing how. The planning problems can be encoded using model checking problems in the framework, which can be computed in PTIME in the size of the the model. We also axiomatize the logic over finite models with perfect recall.

As for future work, besides many theoretical questions about the knowing how logic as it is, we may extend its expressive power to capture conditional knowledge-how, which is very useful in reasoning about planning problems. For example, one may say I know how to achieve $\phi$ given $\psi$. Note that it is very different from $\Kh_i(\psi\to \phi)$ or $\K_i\psi\to \Kh_i\phi$. The important difference is that such conditional knowledge-how is global, compared to the current local semantics of $\Kh_i$. This is similar to the binary global $\Kh_i$ operator introduced in \cite{Wang2016}. It would be interesting to combine the two notations of know-how in the same framework. On the practical side, we can consider the model checking problems over compact representations of the actions and states using the techniques in \cite{LiW19,LiWangAIJ2021} connecting the explicit-state models with the compact representations.

% One drawback of our current work is that we cannot express precisely: once I know how to do this then I know how to do that.  
% \begin{itemize}
% \item $\Kh_1 (\phi\to \psi)$ does not express this... 
% \item $\K_1(\Kh_1\phi\to \Kh_2\psi)$ does not express this... 
% \item $\K_1\Box(\Kh_1\phi\to\Kh_1\psi)$ sounds good where $\Box$ is the transitive closure. 

% Decidable...
% \end{itemize}

%%
%% The acknowledgments section is defined using the "acks" environment
%% (and NOT an unnumbered section). This ensures the proper
%% identification of the section in the article metadata, and the
%% consistent spelling of the heading.
% \begin{acks} %% 致谢
% % To Robert, for the bagels and explaining CMYK and color spaces.
% We thank the anonymous reviewers.
% \end{acks}

%%
%% The next two lines define the bibliography style to be used, and
%% the bibliography file.
\bibliographystyle{eptcs}
\bibliography{sgwyjKH}

\begin{thebibliography}{10}
\providecommand{\bibitemdeclare}[2]{}
\providecommand{\surnamestart}{}
\providecommand{\surnameend}{}
\providecommand{\urlprefix}{Available at }
\providecommand{\url}[1]{\texttt{#1}}
\providecommand{\href}[2]{\texttt{#2}}
\providecommand{\urlalt}[2]{\href{#1}{#2}}
\providecommand{\doi}[1]{doi:\urlalt{http://dx.doi.org/#1}{#1}}
\providecommand{\bibinfo}[2]{#2}

\bibitemdeclare{inproceedings}{AgotnesGJ07}
\bibitem{AgotnesGJ07}
\bibinfo{author}{Thomas \surnamestart {\AA}gotnes\surnameend},
  \bibinfo{author}{Valentin \surnamestart Goranko\surnameend} \&
  \bibinfo{author}{Wojciech \surnamestart Jamroga\surnameend}
  (\bibinfo{year}{2007}): \emph{\bibinfo{title}{Alternating-time temporal
  logics with irrevocable strategies}}.
\newblock In \bibinfo{editor}{Dov \surnamestart Samet\surnameend}, editor: {\sl
  \bibinfo{booktitle}{Proceedings of (TARK 2007)}}, pp.
  \bibinfo{pages}{15--24}, \doi{10.1145/1324249.1324256}.

\bibitemdeclare{book}{mlbook}
\bibitem{mlbook}
\bibinfo{author}{Patrick \surnamestart Blackburn\surnameend},
  \bibinfo{author}{Maarten \surnamestart de~Rijke\surnameend} \&
  \bibinfo{author}{Yde \surnamestart Venema\surnameend} (\bibinfo{year}{2002}):
  \emph{\bibinfo{title}{{Modal Logic}}}.
\newblock \bibinfo{publisher}{Cambridge University Press},
  \doi{10.1017/CBO9781107050884}.

\bibitemdeclare{article}{BA2011}
\bibitem{BA2011}
\bibinfo{author}{Thomas \surnamestart Bolander\surnameend} \&
  \bibinfo{author}{Mikkel~Birkegaard \surnamestart Andersen\surnameend}
  (\bibinfo{year}{2011}): \emph{\bibinfo{title}{{Epistemic planning for single
  and multi-agent systems}}}.
\newblock {\sl \bibinfo{journal}{Journal of Applied Non-Classical Logics}}
  \bibinfo{volume}{21}(\bibinfo{number}{1}), pp. \bibinfo{pages}{9--34},
  \doi{10.3166/jancl.21.9-34}.

\bibitemdeclare{article}{BOLANDER202010}
\bibitem{BOLANDER202010}
\bibinfo{author}{Thomas \surnamestart Bolander\surnameend},
  \bibinfo{author}{Tristan \surnamestart Charrier\surnameend},
  \bibinfo{author}{Sophie \surnamestart Pinchinat\surnameend} \&
  \bibinfo{author}{François \surnamestart Schwarzentruber\surnameend}
  (\bibinfo{year}{2020}): \emph{\bibinfo{title}{DEL-based epistemic planning:
  Decidability and complexity}}.
\newblock {\sl \bibinfo{journal}{Artificial Intelligence}}
  \bibinfo{volume}{287}, p. \bibinfo{pages}{103304},
  \doi{10.1016/j.artint.2020.103304}.

\bibitemdeclare{inproceedings}{Bolander2015}
\bibitem{Bolander2015}
\bibinfo{author}{Thomas \surnamestart Bolander\surnameend},
  \bibinfo{author}{Martin~Holm \surnamestart Jensen\surnameend} \&
  \bibinfo{author}{François \surnamestart Schwarzentruber\surnameend}
  (\bibinfo{year}{2015}): \emph{\bibinfo{title}{{Complexity Results in
  Epistemic Planning}}}.
\newblock In: {\sl \bibinfo{booktitle}{Proceedings of IJCAI '15}}, pp.
  \bibinfo{pages}{2791--2797}.
\newblock \urlprefix\url{http://ijcai.org/Abstract/15/395}.

\bibitemdeclare{article}{Cimatti2003}
\bibitem{Cimatti2003}
\bibinfo{author}{A.~\surnamestart Cimatti\surnameend},
  \bibinfo{author}{M.~\surnamestart Pistore\surnameend},
  \bibinfo{author}{M.~\surnamestart Roveri\surnameend} \&
  \bibinfo{author}{P.~\surnamestart Traverso\surnameend}
  (\bibinfo{year}{2003}): \emph{\bibinfo{title}{Weak, Strong, and Strong Cyclic
  Planning via Symbolic Model Checking}}.
\newblock {\sl \bibinfo{journal}{Artificial Intelligence}}
  \bibinfo{volume}{147}(\bibinfo{number}{1–2}), p. \bibinfo{pages}{35–84},
  \doi{10.1016/S0004-3702(02)00374-0}.

\bibitemdeclare{article}{EngesserMNT21}
\bibitem{EngesserMNT21}
\bibinfo{author}{Thorsten \surnamestart Engesser\surnameend},
  \bibinfo{author}{Robert \surnamestart Mattm\"{u}ller\surnameend},
  \bibinfo{author}{Bernhard \surnamestart Nebel\surnameend} \&
  \bibinfo{author}{Michael \surnamestart Thielscher\surnameend}
  (\bibinfo{year}{2021}): \emph{\bibinfo{title}{Game description language and
  dynamic epistemic logic compared}}.
\newblock {\sl \bibinfo{journal}{Artificial Intelligence}}
  \bibinfo{volume}{292}, p. \bibinfo{pages}{103433},
  \doi{10.1016/j.artint.2020.103433}.

\bibitemdeclare{inproceedings}{Fervari2017}
\bibitem{Fervari2017}
\bibinfo{author}{Raul \surnamestart Fervari\surnameend},
  \bibinfo{author}{Andreas \surnamestart Herzig\surnameend},
  \bibinfo{author}{Yanjun \surnamestart Li\surnameend} \&
  \bibinfo{author}{Yanjing \surnamestart Wang\surnameend}
  (\bibinfo{year}{2017}): \emph{\bibinfo{title}{{Strategically knowing how}}}.
\newblock In: {\sl \bibinfo{booktitle}{Proceedings of IJCAI '17}}, pp.
  \bibinfo{pages}{1031--1038}, \doi{10.24963/ijcai.2017/143}.

\bibitemdeclare{book}{ghallab2004automated}
\bibitem{ghallab2004automated}
\bibinfo{author}{Malik \surnamestart Ghallab\surnameend}, \bibinfo{author}{Dana
  \surnamestart Nau\surnameend} \& \bibinfo{author}{Paolo \surnamestart
  Traverso\surnameend} (\bibinfo{year}{2004}): \emph{\bibinfo{title}{{Automated
  planning: theory and practice}}}.
\newblock \bibinfo{publisher}{Elsevier}.

\bibitemdeclare{}{Giunchiglia2000}
\bibitem{Giunchiglia2000}
\bibinfo{author}{Fausto \surnamestart Giunchiglia\surnameend} \&
  \bibinfo{author}{Paolo \surnamestart Traverso\surnameend}
  (\bibinfo{year}{2000}): \emph{\bibinfo{title}{{Planning as Model Checking}}},
  \doi{10.1007/10720246\_1}.

\bibitemdeclare{article}{Herzig15}
\bibitem{Herzig15}
\bibinfo{author}{Andreas \surnamestart Herzig\surnameend}
  (\bibinfo{year}{2015}): \emph{\bibinfo{title}{{Logics of knowledge and
  action: critical analysis and challenges}}}.
\newblock {\sl \bibinfo{journal}{Autonomous Agents and Multi-Agent Systems}}
  \bibinfo{volume}{29}(\bibinfo{number}{5}), pp. \bibinfo{pages}{719--753},
  \doi{10.1007/s10458-014-9267-z}.

\bibitemdeclare{article}{JamrogaA07}
\bibitem{JamrogaA07}
\bibinfo{author}{Wojciech \surnamestart Jamroga\surnameend} \&
  \bibinfo{author}{Thomas \surnamestart {\AA}gotnes\surnameend}
  (\bibinfo{year}{2007}): \emph{\bibinfo{title}{{Constructive knowledge: what
  agents can achieve under imperfect information}}}.
\newblock {\sl \bibinfo{journal}{Journal of Applied Non-Classical Logics}}
  \bibinfo{volume}{17}(\bibinfo{number}{4}), pp. \bibinfo{pages}{423--475},
  \doi{10.3166/jancl.17.423-475}.

\bibitemdeclare{inproceedings}{LangZ12}
\bibitem{LangZ12}
\bibinfo{author}{J{\'{e}}r{\^{o}}me \surnamestart Lang\surnameend} \&
  \bibinfo{author}{Bruno \surnamestart Zanuttini\surnameend}
  (\bibinfo{year}{2012}): \emph{\bibinfo{title}{{Knowledge-Based Programs as
  Plans - The Complexity of Plan Verification}}}.
\newblock In: {\sl \bibinfo{booktitle}{ECAI '12}}, pp.
  \bibinfo{pages}{504--509}, \doi{10.3233/978-1-61499-098-7-504}.

\bibitemdeclare{inproceedings}{Lang2013}
\bibitem{Lang2013}
\bibinfo{author}{J{\'{e}}r{\^{o}}me \surnamestart Lang\surnameend} \&
  \bibinfo{author}{Bruno \surnamestart Zanuttini\surnameend}
  (\bibinfo{year}{2013}): \emph{\bibinfo{title}{{Knowledge-Based Programs as
  Plans : Succinctness and the Complexity of Plan Existence (Extended
  Abstract)}}}.
\newblock In: {\sl \bibinfo{booktitle}{TARK '13}}.
\newblock \urlprefix\url{https://arxiv.org/abs/1310.6429}.

\bibitemdeclare{inproceedings}{LiW19}
\bibitem{LiW19}
\bibinfo{author}{Yanjun \surnamestart Li\surnameend} \&
  \bibinfo{author}{Yanjing \surnamestart Wang\surnameend}
  (\bibinfo{year}{2019}): \emph{\bibinfo{title}{Multi-agent Knowing How via
  Multi-step Plans: {A} Dynamic Epistemic Planning Based Approach}}.
\newblock In: {\sl \bibinfo{booktitle}{Proceedings of LORI VII}}, pp.
  \bibinfo{pages}{126--139}, \doi{10.1007/978-3-662-60292-8\_10}.

\bibitemdeclare{article}{LiWangAIJ2021}
\bibitem{LiWangAIJ2021}
\bibinfo{author}{Yanjun \surnamestart Li\surnameend} \&
  \bibinfo{author}{Yanjing \surnamestart Wang\surnameend}
  (\bibinfo{year}{2021}): \emph{\bibinfo{title}{Planning-Based Knowing How: A
  Unified Approach}}.
\newblock {\sl \bibinfo{journal}{Artificial Intelligence}}
  \bibinfo{volume}{296}, p. \bibinfo{pages}{103487},
  \doi{10.1016/j.artint.2021.103487}.

\bibitemdeclare{inproceedings}{MaubertMPSS20}
\bibitem{MaubertMPSS20}
\bibinfo{author}{Bastien \surnamestart Maubert\surnameend},
  \bibinfo{author}{Aniello \surnamestart Murano\surnameend},
  \bibinfo{author}{Sophie \surnamestart Pinchinat\surnameend},
  \bibinfo{author}{Fran{\c{c}}ois \surnamestart Schwarzentruber\surnameend} \&
  \bibinfo{author}{Silvia \surnamestart Stranieri\surnameend}
  (\bibinfo{year}{2020}): \emph{\bibinfo{title}{Dynamic Epistemic Logic Games
  with Epistemic Temporal Goals}}.
\newblock In: {\sl \bibinfo{booktitle}{Proceedings of {ECAI} 2020}},
  \bibinfo{publisher}{{IOS} Press}, pp. \bibinfo{pages}{155--162},
  \doi{10.3233/FAIA200088}.

\bibitemdeclare{inproceedings}{MaubertPSS20}
\bibitem{MaubertPSS20}
\bibinfo{author}{Bastien \surnamestart Maubert\surnameend},
  \bibinfo{author}{Sophie \surnamestart Pinchinat\surnameend},
  \bibinfo{author}{Fran{\c{c}}ois \surnamestart Schwarzentruber\surnameend} \&
  \bibinfo{author}{Silvia \surnamestart Stranieri\surnameend}
  (\bibinfo{year}{2020}): \emph{\bibinfo{title}{Concurrent Games in Dynamic
  Epistemic Logic}}.
\newblock In \bibinfo{editor}{Christian \surnamestart Bessiere\surnameend},
  editor: {\sl \bibinfo{booktitle}{Proceedings of {IJCAI} 2020}},
  \bibinfo{publisher}{ijcai.org}, pp. \bibinfo{pages}{1877--1883},
  \doi{10.24963/ijcai.2020/260}.

\bibitemdeclare{inproceedings}{Naumov2018}
\bibitem{Naumov2018}
\bibinfo{author}{Pavel \surnamestart Naumov\surnameend} \& \bibinfo{author}{Jia
  \surnamestart Tao\surnameend} (\bibinfo{year}{2018}):
  \emph{\bibinfo{title}{{Second-Order Know-How Strategies}}}.
\newblock In: {\sl \bibinfo{booktitle}{Proceedings of AAMAS '18}}, pp.
  \bibinfo{pages}{390--398}, \doi{10.5555/3237383.3237444}.

\bibitemdeclare{article}{Naumov2018a}
\bibitem{Naumov2018a}
\bibinfo{author}{Pavel \surnamestart Naumov\surnameend} \& \bibinfo{author}{Jia
  \surnamestart Tao\surnameend} (\bibinfo{year}{2018}):
  \emph{\bibinfo{title}{{Together we know how to achieve: An epistemic logic of
  know-how}}}.
\newblock {\sl \bibinfo{journal}{Artificial Intelligence}}
  \bibinfo{volume}{262}, pp. \bibinfo{pages}{279--300},
  \doi{10.1016/j.artint.2018.06.007}.

\bibitemdeclare{inproceedings}{Padmanabha2018}
\bibitem{Padmanabha2018}
\bibinfo{author}{Anantha \surnamestart Padmanabha\surnameend},
  \bibinfo{author}{R.~\surnamestart Ramanujam\surnameend} \&
  \bibinfo{author}{Yanjing \surnamestart Wang\surnameend}
  (\bibinfo{year}{2018}): \emph{\bibinfo{title}{{Bundled fragments of
  first-order modal logic: (un)decidability}}}.
\newblock In: {\sl \bibinfo{booktitle}{Proceedings of FSTTCS '18}},
  \doi{10.4230/LIPIcs.FSTTCS.2018.43}.

\bibitemdeclare{article}{Schobbens04}
\bibitem{Schobbens04}
\bibinfo{author}{Pierre{-}Yves \surnamestart Schobbens\surnameend}
  (\bibinfo{year}{2004}): \emph{\bibinfo{title}{Alternating-time logic with
  imperfect recall}}.
\newblock {\sl \bibinfo{journal}{Electron. Notes Theor. Comput. Sci.}}
  \bibinfo{volume}{85}(\bibinfo{number}{2}), pp. \bibinfo{pages}{82--93},
  \doi{10.1016/S1571-0661(05)82604-0}.

\bibitemdeclare{article}{stanley2001knowing}
\bibitem{stanley2001knowing}
\bibinfo{author}{Jason \surnamestart Stanley\surnameend} \&
  \bibinfo{author}{Timothy \surnamestart Williamson\surnameend}
  (\bibinfo{year}{2001}): \emph{\bibinfo{title}{{Knowing how}}}.
\newblock {\sl \bibinfo{journal}{The Journal of Philosophy}}, pp.
  \bibinfo{pages}{411--444}, \doi{10.2307/2678403}.

\bibitemdeclare{inproceedings}{Wang2017}
\bibitem{Wang2017}
\bibinfo{author}{Yanjing \surnamestart Wang\surnameend} (\bibinfo{year}{2017}):
  \emph{\bibinfo{title}{A New Modal Framework for Epistemic Logic}}.
\newblock In \bibinfo{editor}{J{\'{e}}r{\^{o}}me \surnamestart
  Lang\surnameend}, editor: {\sl \bibinfo{booktitle}{Proceedings of TARK '17}},
  {\sl \bibinfo{series}{{EPTCS}}} \bibinfo{volume}{251}, pp.
  \bibinfo{pages}{515--534}, \doi{10.4204/EPTCS.251.38}.

\bibitemdeclare{article}{Wang2016}
\bibitem{Wang2016}
\bibinfo{author}{Yanjing \surnamestart Wang\surnameend} (\bibinfo{year}{2018}):
  \emph{\bibinfo{title}{{A logic of goal-directed knowing how}}}.
\newblock {\sl \bibinfo{journal}{Synthese}} \bibinfo{volume}{10}, pp.
  \bibinfo{pages}{4419--4439}, \doi{10.1007/s11229-016-1272-0}.

\bibitemdeclare{incollection}{Wang2018}
\bibitem{Wang2018}
\bibinfo{author}{Yanjing \surnamestart Wang\surnameend} (\bibinfo{year}{2018}):
  \emph{\bibinfo{title}{{Beyond knowing that: a new generation of epistemic
  logics}}}.
\newblock In: {\sl \bibinfo{booktitle}{Jaakko Hintikka on knowledge and game
  theoretical semantics}}, \bibinfo{publisher}{Springer, Cham}, pp.
  \bibinfo{pages}{499--533}, \doi{10.1007/978-3-319-62864-6\_21}.

\bibitemdeclare{article}{ZanuttiniLSS20}
\bibitem{ZanuttiniLSS20}
\bibinfo{author}{Bruno \surnamestart Zanuttini\surnameend},
  \bibinfo{author}{J{\'{e}}r{\^{o}}me \surnamestart Lang\surnameend},
  \bibinfo{author}{Abdallah \surnamestart Saffidine\surnameend} \&
  \bibinfo{author}{Fran{\c{c}}ois \surnamestart Schwarzentruber\surnameend}
  (\bibinfo{year}{2020}): \emph{\bibinfo{title}{Knowledge-based programs as
  succinct policies for partially observable domains}}.
\newblock {\sl \bibinfo{journal}{Artificial Intelligence}}
  \bibinfo{volume}{288}, p. \bibinfo{pages}{103365},
  \doi{10.1016/j.artint.2020.103365}.

\end{thebibliography}

\end{document}
\endinput
%%
%% End of file `sample-sigconf.tex'.